% check filter form, update Appendix -1, discuss fullCoef. Reference page
%%%%%%%% ICML 2022 EXAMPLE LATEX SUBMISSION FILE %%%%%%%%%%%%%%%%%

\documentclass[nohyperref]{article}

% Recommended, but optional, packages for figures and better typesetting:
\usepackage{microtype}
\usepackage{graphicx}
\usepackage{subfigure}
\usepackage{booktabs} % for professional tables

% hyperref makes hyperlinks in the resulting PDF.
% If your build breaks (sometimes temporarily if a hyperlink spans a page)
% please comment out the following usepackage line and replace
% \usepackage{icml2022} with \usepackage[nohyperref]{icml2022} above.
\usepackage{hyperref}

% Attempt to make hyperref and algorithmic work together better:

% Use the following line for the initial blind version submitted for review:
%\usepackage{icml2022}
 
% If accepted, instead use the following line for the camera-ready submission:
\usepackage[accepted]{icml2022}

% For theorems and such
\usepackage{amsmath}
\usepackage{amssymb}
\usepackage{mathtools}
\usepackage{amsthm}
\usepackage{ulem} 
% if you use cleveref..
% \usepackage[capitalize,noabbrev]{cleveref}

%%%%%%%%%%%%%%%%%%%%%%%%%%%%%%%%
% THEOREMS
%%%%%%%%%%%%%%%%%%%%%%%%%%%%%%%%
\theoremstyle{plain}
\newtheorem{theorem}{Theorem}[section]
\newtheorem{proposition}[theorem]{Proposition}
\newtheorem{lemma}[theorem]{Lemma}
\newtheorem{corollary}[theorem]{Corollary}
\theoremstyle{definition}
\newtheorem{definition}[theorem]{Definition}

\theoremstyle{remark}

% Todonotes is useful during development; simply uncomment the next line
% and comment out the line below the next line to turn off comments
%\usepackage[disable,textsize=tiny]{todonotes}
\usepackage[textsize=tiny]{todonotes}
%%%%% NEW MATH DEFINITIONS %%%%%

\usepackage{amsmath,amsfonts,bm}

% Mark sections of captions for referring to divisions of figures

% Highlight a newly defined term

\def\dd{\mathrm{d}}
% Figure reference, lower-case.

% Figure reference, capital. For start of sentence

% Section reference, lower-case.

% Section reference, capital.

% Reference to two sections.

% Reference to three sections.

% Reference to an equation, lower-case.
\def\eqref#1{equation~\ref{#1}}
% Reference to an equation, upper case

% A raw reference to an equation---avoid using if possible

% Reference to a chapter, lower-case.

% Reference to an equation, upper case.

% Reference to a range of chapters

% Reference to an algorithm, lower-case.

% Reference to an algorithm, upper case.

% Reference to a part, lower case

% Reference to a part, upper case

\def\1{\bm{1}}

% Random variables

% rm is already a command, just don't name any random variables m

% Random vectors

% Elements of random vectors

% Random matrices

% Elements of random matrices

% Vectors

% Elements of vectors

% Matrix

% Tensor
\DeclareMathAlphabet{\mathsfit}{\encodingdefault}{\sfdefault}{m}{sl}
\SetMathAlphabet{\mathsfit}{bold}{\encodingdefault}{\sfdefault}{bx}{n}

% Graph

\def\gG{{\mathcal{G}}}

\def\gS{{\mathcal{S}}}

% Sets
\def\sA{{\mathbb{A}}}
\def\sB{{\mathbb{B}}}

% Don't use a set called E, because this would be the same as our symbol
% for expectation.

\def\sI{{\mathbb{I}}}
\def\sJ{{\mathbb{J}}}

\def\sR{{\mathbb{R}}}

\def\sV{{\mathbb{V}}}

% Entries of a matrix

% entries of a tensor
% Same font as tensor, without \bm wrapper

% The true underlying data generating distribution

% The empirical distribution defined by the training set

% The model distribution

% Stochastic autoencoder distributions

 % Laplace distribution

% Wolfram Mathworld says $L^2$ is for function spaces and $\ell^2$ is for vectors
% But then they seem to use $L^2$ for vectors throughout the site, and so does
% wikipedia.

 % See usage in notation.tex. Chosen to match Daphne's book.

% The \icmltitle you define below is probably too long as a header.
% Therefore, a short form for the running title is supplied here:
\icmltitlerunning{How Powerful are Spectral Graph Neural Networks}

\begin{document}

\twocolumn[
\icmltitle{How Powerful are Spectral Graph Neural Networks}

% It is OKAY to include author information, even for blind
% submissions: the style file will automatically remove it for you
% unless you've provided the [accepted] option to the icml2022
% package.

% List of affiliations: The first argument should be a (short)
% identifier you will use later to specify author affiliations
% Academic affiliations should list Department, University, City, Region, Country
% Industry affiliations should list Company, City, Region, Country

% You can specify symbols, otherwise they are numbered in order.
% Ideally, you should not use this facility. Affiliations will be numbered
% in order of appearance and this is the preferred way.
\icmlsetsymbol{equal}{*}

\begin{icmlauthorlist}
\icmlauthor{Xiyuan Wang}{pku}
\icmlauthor{Muhan Zhang}{pku,bigai}
\end{icmlauthorlist}

\icmlaffiliation{pku}{Institute for Artificial Intelligence, Peking University}
\icmlaffiliation{bigai}{Beijing Institute for General Artificial Intelligence}

\icmlcorrespondingauthor{Muhan Zhang}{muhan@pku.edu.cn}

% You may provide any keywords that you
% find helpful for describing your paper; these are used to populate
% the "keywords" metadata in the PDF but will not be shown in the document
\icmlkeywords{Expressive Power, Graph Neural Network}

\vskip 0.3in
]

% this must go after the closing bracket ] following \twocolumn[ ...

% This command actually creates the footnote in the first column
% listing the affiliations and the copyright notice.
% The command takes one argument, which is text to display at the start of the footnote.
% The \icmlEqualContribution command is standard text for equal contribution.
% Remove it (just {}) if you do not need this facility.

\printAffiliationsAndNotice{}

\begin{abstract}
Spectral Graph Neural Network is a kind of Graph Neural Network (GNN) based on graph signal filters. Some models able to learn arbitrary spectral filters have emerged recently. However, few works analyze the expressive power of spectral GNNs. This paper studies spectral GNNs’ expressive power theoretically. We first prove that even spectral GNNs without nonlinearity can produce arbitrary graph signals and give two conditions for reaching universality. They are: 1) no multiple eigenvalues of graph Laplacian, and 2) no missing frequency components in node features. We also establish a connection between the expressive power of spectral GNNs and Graph Isomorphism (GI) testing, the latter of which is often used to characterize spatial GNNs’ expressive power. Moreover, we study the difference in empirical performance among different spectral GNNs with the same expressive power from an optimization perspective, and motivate the use of an orthogonal basis whose weight function corresponds to the graph signal density in the spectrum. Inspired by the analysis, we propose JacobiConv, which uses Jacobi basis due to its orthogonality and flexibility to adapt to a wide range of weight functions. JacobiConv deserts nonlinearity while outperforming all baselines on both synthetic and real-world datasets.
\end{abstract}

\section{Introduction}
Graph Neural Networks (GNNs) have achieved state-of-the-art performance on almost all tasks among various graph representation learning methods~\citep{intro_nlp,intro_bio,intro_soc}. Spectral GNNs are a kind of GNNs that design graph signal filters in the spectral domain. Though various models have emerged, spectral GNNs' expressive power is still under-researched. Moreover, these models differ mainly in the basis choices of the spectral filters; however, to our best knowledge, no study has systematically explained these differences and studied the advantages and disadvantages of different bases. 

Existing spectral GNNs can be summarized into a general form: first transforming the spatial signal $X$ through an MLP, then applying spectral filters parameterized by a polynomial of the normalized Laplacian $\hat{L}$, and finally applying another MLP to the filtered signal. By designing/learning the polynomial coefficients, spectral GNNs can simulate a wide range of filters (low-pass, band-pass, high-pass) in the spectral domain, enabling GNNs to work on not only homophilic but also heterophilic graphs~\citep{GPRGNN}. 

However, a natural question is whether the MLPs or nonlinearity are useful at all, or are spectral filters enough? To study this problem, we remove nonlinearity from spectral GNNs and explore the expressive power of such linear spectral GNNs whose power relies only on the spectral filters. We prove that linear GNNs are \textit{universal} under some mild conditions, i.e., they are powerful enough to produce arbitrary predictions without relying on MLPs. Our results show that nonlinearity is unnecessary for spectral GNNs to reach high expressiveness, which is also verified in our experiments. Moreover, we analyze spectral GNNs' universality conditions from a Graph Isomorphism (GI) testing perspective. The latter is often used to characterize spatial GNNs' expressive power~\citep{HowPowerfulAreGNNs}. Our results, for the first time, build a bridge between the expressivity analyses of spectral GNNs and spatial GNNs.

Next, we notice that spectral GNNs with different polynomial bases of the spectral filters have the same expressive power but different empirical performance. To study this difference, we analyze the optimization of such models. By checking the Hessian matrix of linear spectral GNNs near the global minimum, we find that using an orthogonal basis with the density of graph signal as the weight function can maximize the convergence speed. 

Inspired by these discussions, we propose a novel expressive spectral GNN, JacobiConv. JacobiConv deserts nonlinearity, approximates filter functions with Jacobi basis, and is flexible enough in weight function choices to adapt to a wide range of graph signal densities. We also design a novel Polynomial Coefficient Decomposition (PCD) technique to improve the filter coefficient optimization. In numerical experiments, we first test the expressive power of JacobiConv to approximate filter functions on synthetic datasets. JacobiConv achieves the lowest loss on learning the filter functions compared to state-of-the-art spectral GNNs. We also show that JacobiConv outperforms all baselines on ten real-world datasets by up to $12\%$. 

\section{Preliminaries}
For any matrix $M\in \sR^{a\times b}$, $M_{i}$ is the $i^{\text{th}}$ row of $M$, $M_{:i}$ is the $i^{\text{th}}$ column of $M$, $M_{\sA\sB}$ is the submatrix of $M$ corresponding to row index set $\sA$ and column index set $\sB$. Let $\delta_{ij}$ denote Kronecker delta: $1$ if $i=j$, and $0$ otherwise. We define \textit{condition number} of a matrix $M$ as $\kappa(M)=\frac{|\lambda_{\text{max}}|}{|\lambda_{\text{min}}|}$, where $\lambda_{\text{min}}$, $\lambda_{\text{max}}$ are the minimum and maximum eigenvalues of $M$, respectively. If $M$ is singular, $\kappa(M)=+\infty$.

Let $\gG = (\sV,\mathbb{E},X)$ denote an undirected graph with a finite node set $\sV=\{1,2,...,n\}$, an edge set $\mathbb{E}\subseteq \sV\times\sV$ and a node feature matrix $X\in \sR^{n\times d}$, whose $i^{\text{th}}$ row $X_i$ is the feature vector of node $i$. $N(i)$ refers to the set of nodes adjacent to node $i$. Let $A$ be the adjacency matrix of $\gG$ and $D$ be the diagonal matrix whose diagonal element $D_{ii}$ is the degree of node $i$. The normalized adjacency matrix is $\hat A = D^{-\frac{1}{2}}AD^{-\frac{1}{2}}$. Let $I$ denote the identity matrix. The \textit{normalized Laplacian matrix} $\hat L=I - \hat A$. Let $\hat L = U\Lambda U^T$ denote the eigendecomposition of $\hat L$, where $U$ is the matrix of eigenvectors and $\Lambda$ is the diagonal matrix of eigenvalues. 
\subsection{Graph Isomorphism}
A \textit{permutation} $\pi$ is a bijective mapping from $\{1, 2, ... , n\}$ to $\{1, 2, ..., n\}$, where $n\in \mathbb{N}^+$. For node set $\sV$, $\pi(\sV)=\{\pi(i)|i\in \sV\}$. For node feature matrix $X$, $\pi(X)_{\pi(i)}=X_i$. For edge set $\mathbb{E}$, $\pi(\mathbb{E})=\{(\pi(i),\pi(j))|(i,j)\in \mathbb{E}\}$. An \textit{automorphism} of a graph $\gG=(\sV, \mathbb{E})$ is a permutation $\pi$ such that $\pi(\sV)=\sV,\pi(\mathbb{E})=\mathbb{E}$. An automorphism of a graph with node features $\gG=(\sV, \mathbb{E}, X)$ is a permutation $\pi$ such that $\pi(\sV)=\sV,\pi(\mathbb{E})=\mathbb{E}, \pi(X)=X$. The \textit{order} of an automorphism is $\min_k \pi^k=e, k=1,2,...$, where $e$ is the identity mapping. Two nodes $i,j$ are \textit{isomorphic} if $\pi(i)=j$ under some automorphism $\pi$.

\subsection{Graph Signal Filter and Spectral GNNs}
The \textit{graph Fourier transform} of a signal $X\in\sR^{n\times d}$ is defined as $\tilde X = U^T X \in \sR^{n\times d}$. The \textit{inverse transform} is $X = U \tilde X$~\citep{GSP}. The $i^{\text{th}}$ column of $U$ is a frequency component corresponding to the eigenvalue $\lambda_i$. 

Let $\tilde X_\lambda=U_{:\lambda}^T X$, where $U_{:\lambda}$ is the eigenvector corresponding to $\lambda$, be the frequency component of $X$ at $\lambda$ frequency. If $\tilde X_\lambda\neq \vec 0$, we say $X$ contains the $\lambda$ frequency component. Otherwise, the $\lambda$ frequency component is missing from $X$.

We can use $g: [0,2]\to \sR$ to filter each frequency component by multiplying $g(\lambda)$. Applying a spectral filter $g$ on signal $X$ is defined as follows
\begin{equation}
\begin{aligned}
Ug(\Lambda)U^TX,
\end{aligned}
\end{equation}
where $g(\Lambda)$ applies $g$ element-wisely to the diagonal entries of $\Lambda$. %By setting the filter function $g$ to a polynomial, the filter can be expressed as a polynomial of $\hat L$:
To parameterize the filter, $g$ is often set to be a polynomial of degree $K$
\begin{equation}
\begin{aligned}
g(\lambda)&:=\sum_{k=0}^{K} \alpha_k \lambda^k.%\\
% g(\hat L) &= \sum_{k=0}^{K-1}\alpha_k U\Lambda^k U^T= \sum_{k=0}^{K-1}\alpha_k \hat L^k.
\end{aligned}
\end{equation}
Then, the filtering process can be expressed by
\begin{equation}
\begin{aligned}
Ug(\Lambda)U^TX = \sum_{k=0}^{K}\alpha_k U\Lambda^k U^TX = \sum_{k=0}^{K}\alpha_k \hat L^k X.
\end{aligned}
\end{equation}
By defining $g(\hat L) = \sum_{k=0}^{K}\alpha_k \hat L^k$, we can rewrite the filtering process as follows:
\begin{equation}
\begin{aligned}
Ug(\Lambda)U^TX = g(\hat L) X.
\end{aligned}
\end{equation}
We show the forms of some popular spectral GNNs in Appendix~\ref{app::extMod}. In general, existing spectral-based GNNs can be unified into the following form:
\begin{equation}
\begin{aligned}
Z = \phi(g(\hat L)\varphi(X)),
\label{eq:general_form}
\end{aligned}
\end{equation}
where $Z$ is the prediction, $\phi$, $\varphi$ are functions like multi-layer perceptrons (MLPs) and $g$ is a polynomial. If a spectral-based GNN can express any polynomial filter function $g$, we call it \textit{Polynomial-Filter-Most-Expressive} (PFME) GNNs. We also define \textit{Filter-Most-Expressive} (FME) GNNs as the GNNs able to express arbitrary real-valued filter functions.

This study mainly focuses on the case when $\phi$ and $\varphi$ are linear functions, so we define \textit{linear GNN}.
\begin{definition}
A \textbf{linear GNN} can be formulated as $Z = g(\hat L)XW$, where $Z\in \sR^{n\times d'}$ is the prediction matrix, $g$ is a learnable real-valued polynomial, and $W\in \sR^{d\times d'}$ is a learnable matrix.
\end{definition}
Linear GNN keeps the spectral filter form of spectral GNNs despite its simplicity. And the expressive power of linear GNNs is a \textbf{lower bound} for that of general spectral GNNs in Equation~(\ref{eq:general_form}).
\begin{proposition}
Linear GNN is PFME. If $\phi$ and $\varphi$ can express all linear functions, spectral GNNs can differentiate any pair of nodes which linear GNNs can differentiate.
\end{proposition}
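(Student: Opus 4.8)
The plan is to prove the two assertions by explicit simulation, treating each as the statement that one model class contains another; both then reduce to short constructions.

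For ``linear GNN is PFME'', I would argue essentially from the definitions. A linear GNN computes $Z = g(\hat{L})XW$ with $g$ an unconstrained learnable real polynomial and $W$ a learnable matrix, so for any target polynomial filter $g$ I would instantiate the learnable polynomial as $g$ and take $W = I$; the model then outputs $g(\hat{L})X = Ug(\Lambda)U^TX$, which is exactly the signal produced by the filter $g$. Hence every polynomial filter is realizable, i.e., the model is PFME. The one point to state explicitly is that the trailing matrix $W$ cannot obstruct this: it can always be set to $I$, and leaving it free only enlarges the realizable set.

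For the second assertion I would use the simulation idea behind the earlier remark that linear GNNs lower-bound general spectral GNNs. First I would fix the reading of ``differentiate nodes $i$ and $j$'': some admissible parameter choice makes the output rows $Z_i$ and $Z_j$ distinct. Assuming a linear GNN differentiates $i,j$ via a polynomial $g^\star$ and a matrix $W^\star$ with $\bigl(g^\star(\hat{L})XW^\star\bigr)_i \neq \bigl(g^\star(\hat{L})XW^\star\bigr)_j$, I would choose, in the general form $Z = \phi\bigl(g(\hat{L})\varphi(X)\bigr)$, the map $\varphi$ to be the identity, $g := g^\star$, and $\phi$ to be the row-wise linear map $Y \mapsto YW^\star$; both are linear, hence expressible under the hypothesis. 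Since $\phi$ and $\varphi$ act identically across nodes and matrix multiplication is associative, the composition collapses to $Z = g^\star(\hat{L})XW^\star$, so $Z_i \neq Z_j$ and the spectral GNN differentiates $i,j$ as well. Thus every pair of nodes separable by some linear GNN is also separable by some spectral GNN, which is the claim.

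I do not expect a genuine obstacle here; the delicate points are purely about conventions. One is fixing the notion of ``differentiate'' consistently with the rest of the paper's expressivity analysis (distinct outputs under some parameterization, rather than under every parameterization). The other is checking that the per-node, weight-sharing maps $\phi,\varphi$, when restricted to linear behaviour, are exactly right-multiplication by a fixed matrix, so that the choices $\varphi = \mathrm{id}$ and $\phi = (\cdot)W^\star$ genuinely reproduce the linear-GNN map $g(\hat{L})XW$. Once these are pinned down, each part is a one-line verification.
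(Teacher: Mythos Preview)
Your proposal is correct. The paper does not supply a proof for this proposition; it is stated immediately after the definition of linear GNN and treated as self-evident from the definitions, with no corresponding entry in the proofs appendix. Your argument is precisely the natural one the paper leaves implicit: for PFME, the learnable polynomial $g$ is unconstrained so any polynomial filter is realizable (taking $W=I$), and for the second part, instantiating $\varphi=\mathrm{id}$, $g=g^\star$, and $\phi(Y)=YW^\star$ collapses the general form $\phi(g(\hat L)\varphi(X))$ to $g^\star(\hat L)XW^\star$, which is the linear GNN. The caveats you flag about the meaning of ``differentiate'' and about $\phi,\varphi$ being row-wise (node-shared) linear maps are the right sanity checks, and both are consistent with the paper's conventions.
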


We assume all our models work on a fixed graph with fixed node features and only perform node property prediction tasks. Suppose there is an arbitrary real-valued filter function to approximate. Though PFME GNNs can only express polynomial filter functions, as the eigenvalue $\lambda$ is a discrete variable in a fixed graph, an interpolation polynomial always exists for the arbitrary filter and can produce the same output~\citep{Interpolation}. Therefore, PFME GNNs are FME in our problem setting. In the following analysis,
we always assume the linear GNNs have a high-enough degree $K$ in their polynomial filters. Linear GNNs with a limited degree $K$ are discussed in Appendix~\ref{app::limitedDegree}.

\section{Related Work}
\subsection{Spectral GNNs}
Spectral GNNs are GNNs based on spectral graph filters~\citep{SurveyGNN}. \citet{BernNet} categorize spectral GNNs by the filter operation adopted. One class is spectral GNNs with fixed filters: APPNP~\citep{APPNP} utilizes Personalized PageRank (PPR)~\citep{PPR} to build filter functions. GNN-LF/HF~\citep{GNN-LF} designs filter weights from the perspective of graph optimization functions. The other class is spectral GNNs with learnable filters: ChebyNet~\citep{ChebyConv} approximates the filters with Chebyshev polynomials. GPRGNN~\citep{GPRGNN} learns a polynomial filter by directly performing gradient descent on the polynomial coefficients. ARMA~\citep{ARMA} uses rational filters. BernNet~\citep{BernNet} expresses the filtering operation with Bernstein polynomials. Though ARMA~\citep{ARMA} and GNN-LF/HF~\citep{GNN-LF} use rational functions, they approximate the rational functions with polynomials. Therefore, all these methods use some form of polynomial filter despite the different bases they use. GPRGNN is one of the most expressive models. It can express all polynomial filters. So does ChebyNet, as the Chebyshev polynomials also form a complete set of bases in the polynomial space. They are both PFME. BernNet is less expressive as it forces the coefficients of the Bernstein polynomial bases to be positive and can only express positive filter functions. However, such constraints are introduced for regularization, so we ignore them when analyzing the expressive power. The filter forms of these models are summarized in Table~\ref{tab::filter_form}. 

\subsection{Removing Nonlinearity from GNNs}
Various GNNs removing nonlinearity have been proposed. \citet{SGC} precompute $\hat A^k X$ and perform logistic regression on the preprocessed features. Some works leverage personalized PageRank (PPR)~\citep{PPR} and random walk on the graph. \citet{GDC} use generalized graph diffusion, like the heat kernel and PPR, to reconstruct the graph. APPNP~\citep{APPNP} replaces normalized adjacency matrix with approximate PPR matrix to capture multi-hop neighborhood information. Some models with more complex acceleration techniques for computing PPR are introduced, like GBP~\citep{GBP}. Existing linear models are mainly motivated by improving the scalability and have restricted filters. In contrast, we analyze the expressive power and optimization property of linear GNNs with arbitrary polynomial filters.

\subsection{Expressive Power of GNNs}
The Weisfeiler-Lehman (WL) test of graph isomorphism~\citep{WLtest} is a series of algorithms that can distinguish almost all non-isomorphic graphs. Its $1$-dimensional form ($1$-WL) iteratively aggregates neighborhood labels and maps the aggregated labels into a new label for each node, which is similar to GNNs based on neighborhood node feature aggregation. The node labels assigned by $1$-WL test can also be used to check if two nodes are isomorphic~\citep{babai1979canonical,isomorphicnode}, as two isomorphic nodes always have the same label while two non-isomorphic ones mostly have different labels. \citet{HowPowerfulAreGNNs} show that the $1$-WL test bounds the expressive power of GNNs to distinguish non-isomorphic graphs. Since then, various works have attempted to analyze GNNs with the WL test and graph isomorphism testing~\citep{zhang2018end,WL1,WL2,WL3,WL5,WL6,rGIN,zhang2021labeling,zhang2021nested}. Other than the WL test and graph isomorphism, some works measure the expressive power in different ways, such as expressing universal invariant functions~\citep{functionapp1,functionapp2}, counting substructures~\citep{substrCount}, simulating Turing Machine~\citep{TuringUniv}, computing graph properties~\citep{GraphGeneralization}, and differentiating rooted graphs~\citep{GAMLP}. \citet{SpectralExpressive} analyze the expressive power from a spectral perspective, but the discussion is constrained to concluding former models to spectral forms. Our study provides conditions for spectral GNNs to approximate any functions and discuss the relation between these conditions and graph isomorphism. 

\section{Expressive Power of Linear GNNs}

In this section, we prove that linear GNNs are universal under three conditions and discuss these conditions to characterize how powerful spectral GNNs can reach. All proofs are in the appendix.

There are two components in a linear GNN $Z = g(\hat L)XW$:

\textbf{Linear Transformation $W$}. Since $XW=U(\tilde X W)$, a linear transformation in the spatial domain is also a linear transformation in the frequency domain, which produces a linear combination of signals in different channels.

\textbf{Filter $g(\hat L)$}. Since $g(\hat L)X= U(g(\Lambda)\tilde X)$, the filtering operation scales the frequency component corresponding to $\lambda$ of $\tilde X$ by $g(\lambda)$ fold in the frequency domain.

Now we give the \textbf{universality theorem} of linear GNNs.
\begin{theorem}\label{thr::linearexpressive}
{Linear GNNs can produce any one-dimensional prediction if $\hat L$ has no multiple eigenvalues and the node features $X$ contain all frequency components.}
\end{theorem}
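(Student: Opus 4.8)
The plan is to work entirely in the graph Fourier domain. Write the eigendecomposition $\hat L = U\Lambda U^T$; the no-multiple-eigenvalue assumption gives $n$ pairwise distinct eigenvalues $\lambda_1,\dots,\lambda_n$, each with a one-dimensional eigenspace spanned by the column $U_{:i}$. For a one-dimensional prediction we take $W = w \in \mathbb{R}^{d\times 1}$, so the linear GNN computes $Z = g(\hat L)Xw = U\,g(\Lambda)\,U^TXw$. Fix an arbitrary target $z \in \mathbb{R}^n$ and set $\tilde z := U^Tz$ and $\tilde X := U^TX$. Since $U$ is orthogonal, producing $z$ is equivalent to producing $\tilde z$, i.e. to solving $g(\Lambda)(\tilde X w) = \tilde z$, which decouples across frequencies into the $n$ scalar equations $g(\lambda_i)\,(\tilde X w)_i = \tilde z_i$ for $i=1,\dots,n$.

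The second step is to choose $w$ so that every coordinate of $\tilde y := \tilde X w \in \mathbb{R}^n$ is nonzero. Because $X$ contains all frequency components, each row $\tilde X_i = U_{:i}^T X$ is a nonzero vector in $\mathbb{R}^d$ (here I again use that $\lambda_i$ is simple, so "the frequency component at $\lambda_i$" is exactly this single row). Hence $\{w : \tilde X_i w = 0\}$ is a proper linear subspace of $\mathbb{R}^d$ for each $i$, and a finite union of $n$ proper subspaces cannot exhaust $\mathbb{R}^d$; any $w$ outside this union — a generic $w$ suffices, or one can exhibit an explicit one — makes $\tilde y_i \neq 0$ for all $i$ simultaneously.

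The third step is polynomial interpolation. With such a $w$ fixed, the numbers $c_i := \tilde z_i / \tilde y_i$ are well-defined, and since $\lambda_1,\dots,\lambda_n$ are pairwise distinct, Lagrange interpolation produces a real polynomial $g$ of degree at most $n-1$ with $g(\lambda_i) = c_i$ for every $i$; this is available because we assume the filter degree $K$ is high enough. Then $g(\lambda_i)\,\tilde y_i = \tilde z_i$ for all $i$, so $g(\Lambda)\tilde X w = \tilde z$ and therefore $g(\hat L)Xw = U\tilde z = z$, which is exactly the claimed universality.

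I expect the middle step — ruling out, with a single shared $w$, that any of the $n$ Fourier coordinates vanishes — to be the crux, since it is the only place where the two hypotheses genuinely interact: "no multiple eigenvalues" makes each per-frequency equation independently solvable and makes the row $\tilde X_i$ the correct notion of frequency component, while "all frequency components present" guarantees each $\tilde X_i$ is nonzero so that the obstruction is only a measure-zero set. Everything else is an orthogonal change of basis plus standard interpolation, and the same argument makes transparent why dropping either hypothesis breaks universality (a repeated eigenvalue forces $g(\lambda_i)=g(\lambda_j)$; a missing component forces $\tilde y_i = 0$ regardless of $w$), which is presumably how the subsequent discussion proceeds.
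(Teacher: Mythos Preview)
Your proposal is correct and follows essentially the same approach as the paper: first choose $w$ so that every Fourier coordinate $(\tilde X w)_i$ is nonzero via the ``finite union of proper hyperplanes cannot cover $\mathbb{R}^d$'' argument, then interpolate the ratios $\tilde z_i/(\tilde X w)_i$ at the distinct eigenvalues $\lambda_i$. The only cosmetic difference is that the paper phrases the interpolation step as solving a Vandermonde linear system for the monomial coefficients, whereas you invoke Lagrange interpolation directly; these are of course equivalent.
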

There are three conditions for linear GNNs to be universal: 1) one-dimensional prediction, 2) no multiple eigenvalues, and 3) no missing frequency components. These are thus three bottlenecks for linear GNNs' expressive power. In the following, we discuss each of the three conditions in detail.

\subsection{Multidimensional Prediction}\label{sec:multiple_output_dim}
Though linear GNNs are powerful when the output has only one dimension, each dimension may need a different polynomial filter when the prediction has multiple channels. Take the toy graph in Figure~\ref{fig::multipred} as an example. One output dimension filters out high frequency signal and maintains low frequency signal, while the other one does the opposite. Therefore, a low-pass filter is needed for the first output dimension while a high-pass filter is needed for the second dimension. Using the same filter for all output dimensions cannot achieve this purpose.
%$W$, in this case, cannot help. Because it is just a scalar for each output dimension, as the node feature is one-dimensional. {Therefore, $W$ can be merged into $g$ by $Z=g(\hat L)XW=(Wg(\hat L))X=g'(\hat L)X$.} 

We formally describe this property in Proposition~\ref{prop::multidim}.
\begin{proposition}\label{prop::multidim}
%Assuming that the node feature matrix $X$ is not a full-row-rank matrix, even if $X$ contains all the frequency components, and there are no multiple eigenvalues of $\hat L$, then for all $k>1$, there exists a $k$-dimensional prediction linear GNNs cannot produce.
If the node feature matrix $X$ is not a full-row-rank matrix, for all $k>1$ and all graphs, there exists a $k$-dimensional prediction linear GNNs cannot produce.
\end{proposition}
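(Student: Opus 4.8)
Since the statement only asserts the \emph{existence} of an unproducible $k$-dimensional prediction, I would not try to describe the achievable set in general; instead I would write down one explicit target $Z^{*}\in\sR^{n\times k}$ and show by direct computation that $Z^{*}=g(\hat L)XW$ is impossible for \emph{every} polynomial $g$ and every $W$. Only two columns of $Z^{*}$ will carry information and the remaining $k-2$ will be zero, so this is really a $k=2$ argument padded out, giving a single construction valid for all $k>1$ and all graphs at once. To set up, I would use that $\hat L$ is real symmetric, so we may work in the orthonormal eigenbasis $u_{1},\dots,u_{n}$ (the columns of $U$), where $g(\hat L)=\sum_{i}g(\lambda_{i})u_{i}u_{i}^{T}$. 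Because $X$ is not full-row-rank, $\mathrm{col}(X)\subsetneq\sR^{n}$, so there is $v\neq 0$ with $v\perp\mathrm{col}(X)$; expanding $v=\sum_{i}v_{i}u_{i}$, fix an index $i_{0}$ with $v_{i_{0}}\neq 0$. The only property of $i_{0}$ I will need is $u_{i_{0}}\notin\mathrm{col}(X)$, which holds because $u_{i_{0}}\in\mathrm{col}(X)$ would force $\langle v,u_{i_{0}}\rangle=0$, contradicting $\langle v,u_{i_{0}}\rangle=v_{i_{0}}\neq 0$.

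\textbf{The target and the contradiction.} I would take $Z^{*}$ to have first column $u_{i_{0}}$, second column $y:=\sum_{i}u_{i}$ (any vector with $\langle y,u_{i}\rangle\neq 0$ for all $i$, i.e.\ one that contains all frequency components), and all other columns $0$. Suppose $Z^{*}=g(\hat L)XW$. Expanding each column in the eigenbasis gives, for every column index $l$, $(Z^{*})_{:l}=\sum_{i}g(\lambda_{i})\langle u_{i},(XW)_{:l}\rangle\,u_{i}$, hence $\langle (Z^{*})_{:l},u_{i}\rangle=g(\lambda_{i})\langle u_{i},(XW)_{:l}\rangle$ for all $i$. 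Reading the second column, $\langle y,u_{i}\rangle\neq 0$ forces $g(\lambda_{i})\neq 0$ for every $i$. Reading the first column, $\langle u_{i_{0}},u_{i}\rangle=\delta_{ii_{0}}$ together with $g(\lambda_{i})\neq 0$ yields $\langle u_{i},(XW)_{:1}\rangle=0$ for $i\neq i_{0}$ and $\langle u_{i_{0}},(XW)_{:1}\rangle=1/g(\lambda_{i_{0}})\neq 0$; therefore $(XW)_{:1}=\tfrac{1}{g(\lambda_{i_{0}})}u_{i_{0}}$, a nonzero multiple of $u_{i_{0}}$. But $(XW)_{:1}\in\mathrm{col}(X)$, so $u_{i_{0}}\in\mathrm{col}(X)$, contradicting the choice of $i_{0}$. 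Hence no such $g,W$ exist, $Z^{*}\in\sR^{n\times k}$ is unproducible, and since the graph and (in particular) its eigenvalue multiplicities were arbitrary, this is exactly the claim.

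\textbf{Where the difficulty sits.} The delicate point — and the step I would be most careful about — is the treatment of non-invertible filters. With only the ``localized'' column $u_{i_{0}}$, an adversary could choose $g$ vanishing at some $\lambda_{j}$ with $j\neq i_{0}$; then $(XW)_{:1}$ need only equal $u_{i_{0}}/g(\lambda_{i_{0}})$ modulo $\ker g(\hat L)$, and $\mathrm{col}(X)$ might well contain such a vector, so the contradiction evaporates. Forcing $g(\hat L)$ to be invertible via a second column with no missing frequency components is precisely what closes this escape route, and verifying that one localized plus one full-spectrum column already suffices is the crux of the argument. A minor auxiliary fact to state cleanly is that a vector $y$ with $\langle y,u_{i}\rangle\neq 0$ for all $i$ exists, e.g.\ $y=\sum_{i}u_{i}$, or any vector lying off the finitely many hyperplanes $u_{i}^{\perp}$.
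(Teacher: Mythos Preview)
Your argument is correct and, in fact, a bit cleaner than the paper's own proof, though the underlying mechanism is similar.

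\textbf{How the paper argues.} The paper works entirely in the frequency domain with $\tilde Z=g(\Lambda)\tilde X W$. Since $\mathrm{rank}(\tilde X)=\mathrm{rank}(X)<n$, it fixes a maximal linearly independent set of rows $\tilde X_{\sI}$ (so $\tilde X=M\tilde X_{\sI}$ for some $M$). The target $\tilde Z$ is chosen so that every row in $\sI$ is a constant nonzero row vector, while some row outside $\sI$ has two unequal entries. The constant nonzero rows force $g(\Lambda)_{\sI\sI}$ to be invertible, giving $\tilde X_{\sI}W=g(\Lambda)_{\sI\sI}^{-1}\tilde Z_{\sI}$; then $\tilde Z=g(\Lambda)M g(\Lambda)_{\sI\sI}^{-1}\tilde Z_{\sI}$, and since all columns of $\tilde Z_{\sI}$ are equal, all columns of $\tilde Z$ are forced equal, contradicting the chosen row outside $\sI$.

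\textbf{What you do differently.} You locate a single eigenvector $u_{i_{0}}\notin\mathrm{col}(X)$ (guaranteed by $\mathrm{rank}(X)<n$ via the orthogonal complement trick), and make one column of $Z^{*}$ equal to $u_{i_{0}}$ and another equal to a full-spectrum vector. The full-spectrum column forces $g(\hat L)$ to be invertible on the whole spectrum, after which the $u_{i_{0}}$ column pins $(XW)_{:1}$ to a nonzero multiple of $u_{i_{0}}$, contradicting $u_{i_{0}}\notin\mathrm{col}(X)$. Both proofs share the key device of using part of the target to rule out degenerate filters, but the paper leverages the row-dependence structure of $\tilde X$ to force \emph{all} output columns to coincide, whereas you isolate one forbidden direction and reduce to a column-space membership contradiction. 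Your route is shorter, needs no bookkeeping of a maximal independent row set, and is indifferent to eigenvalue multiplicities; the paper's route exposes a little more structure (the achievable $\tilde Z$ is columnwise determined by $\tilde Z_{\sI}$ once the filter is fixed on $\sI$), at the cost of a slightly heavier setup. Your explicit discussion of why a second, full-spectrum column is needed to block $g$ from vanishing is exactly the right point to emphasise.
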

We can use individual polynomial coefficients to compose a different filter for each output channel to solve this problem. 
\begin{figure}[ht]
\vskip -0.1in
\begin{center}
\centerline{\includegraphics[scale=0.36]{ 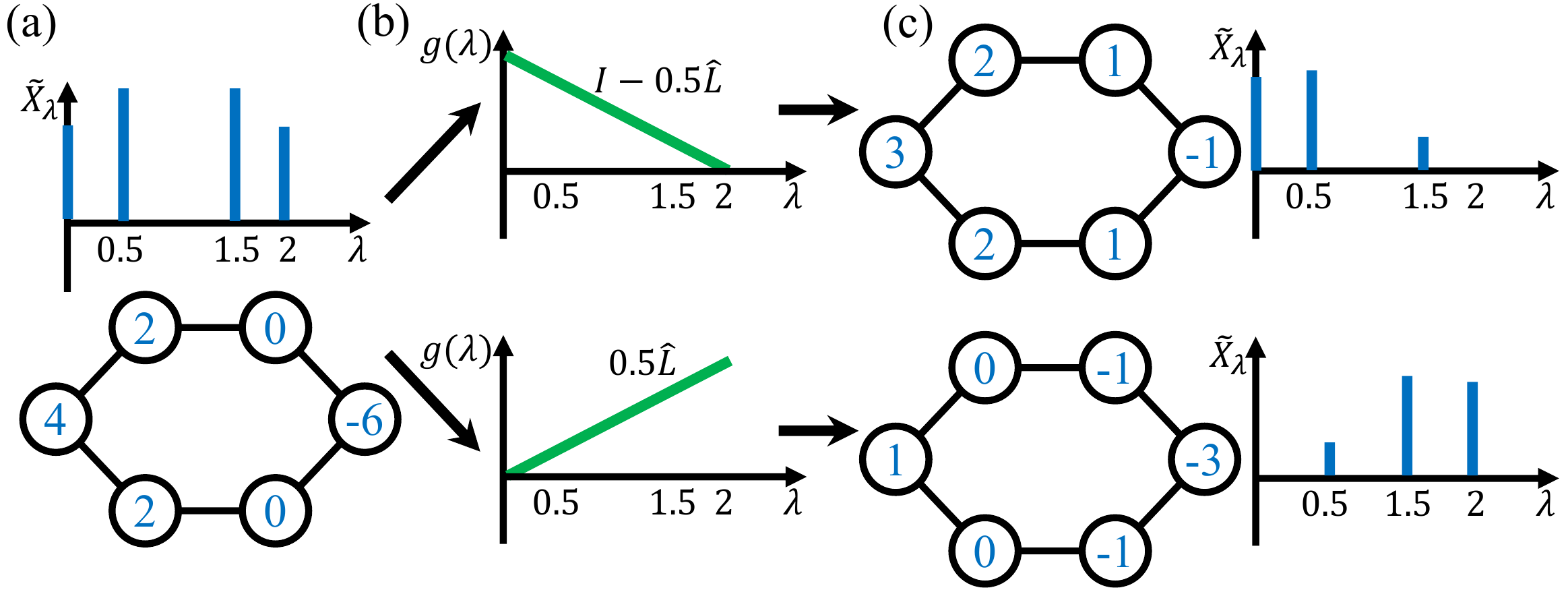}}
\end{center}
\vskip -0.4in
\caption{Individual filter function is needed for each prediction dimension. We illustrate each graph with both its spatial representation (where numbers on nodes represent one-dimensional node features) and its spectrum. (a) A graph with its node features. (b) Two different filters for two output dimensions. (c) Two output dimensions.}\label{fig::multipred}
\vskip -0.2in
\end{figure}
\subsection{Multiple Eigenvalue}
If two frequency components have the same eigenvalue $\lambda$, they will be scaled by the same number $g(\lambda)$. Therefore, the coefficients of these frequency components in prediction will keep the same ratio as in input $XW$. This issue is related to graph topology. More discussion is in Theorem~\ref{thr::singleeigen->permutation}.

\subsection{Missing Frequency Components}
The filter operation can only scale a frequency component. If this frequency component is missing from the node feature, the prediction cannot contain it either. Take the toy graph in Figure~\ref{fig::missingfreq} as an example. The node features only contain component corresponding to frequency $\lambda=0$, so a linear GNN cannot produce output with frequency $\lambda=2$ component. This problem is rooted in both the topology of graph $\gG$ and node features $X$ and is difficult to solve. 
\begin{figure}[ht]
% \vskip -0.1in
\begin{center}
\centerline{\includegraphics[scale=0.42]{ 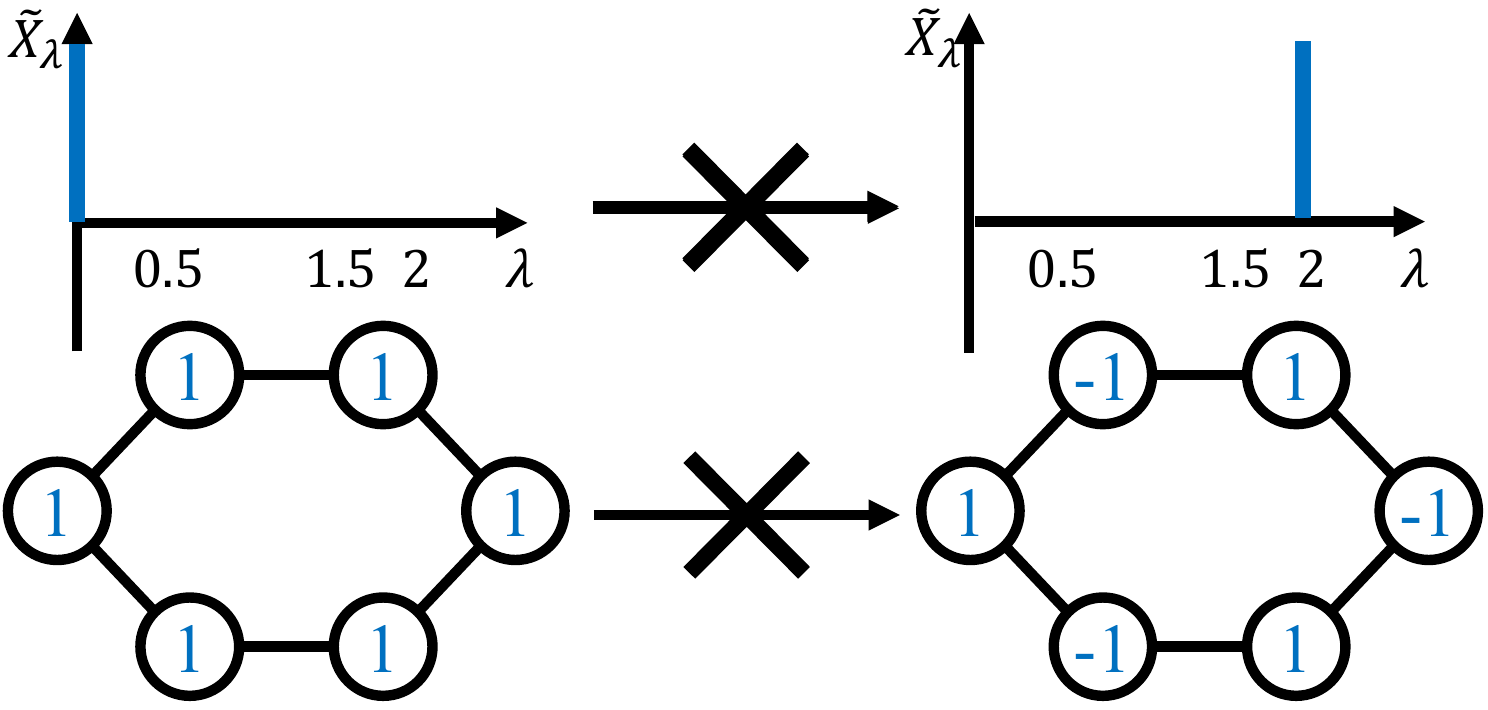}}
\end{center}
\vskip -0.4in
\caption{Node features with missing frequency components cannot produce some outputs.}\label{fig::missingfreq}
\vskip -0.1in
\end{figure}

Nevertheless, multiple eigenvalues and missing frequency components are both \textbf{rare in real-world graphs} with node features. See Appendix~\ref{app::datasets} for the ratio of multiple eigenvalues and number of missing frequency components in each of the 10 real-world benchmark datasets. In all the datasets, no frequency component is missing, and on average less than $1 \%$ of eigenvalues are multiple. Therefore, the universality conditions can be largely satisfied in practice.
\subsection{Connection to Graph Isomorphism}
Traditional expressivity analyses for spatial GNNs often leverage Graph Isomorphism testing. In this section, we explore the connections between our universality conditions and GI. We first build a connection between the expressive power of linear GNNs using a $K$-degree polynomial filter function and that of $(K+1)$-iteration WL test.

\begin{proposition}\label{prop::linear_vs_WL}
Given a linear GNN whose filter function is a $K$-degree polynomial, define the function $\text{LG}_K(i)$ as the prediction of node $i$ produced by the linear GNN. Let $\text{WL}_k(i)$ denote the label of node $i$ produced by $k$-iteration WL test whose initial label of node $i$ is the node feature vector $X_i$. Then $\forall i,j\in \sV$, $LG_K(i)=LG_K(j)$ if $\text{WL}_{K+1}(i)=\text{WL}_{K+1}(j)$. 
\end{proposition}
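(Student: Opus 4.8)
The plan is to push everything onto the normalized adjacency matrix and then mimic the WL iterations by an induction on the walk length. Since $\hat L = I - \hat A$, a degree-$K$ polynomial filter satisfies $g(\hat L) = h(\hat A)$ for some degree-$K$ polynomial $h$; writing $h(\hat A) = \sum_{m=0}^{K}\beta_m \hat A^m$, the prediction becomes $Z = \sum_{m=0}^{K}\beta_m \hat A^m X W$, so $\text{LG}_K(i) = Z_i = \sum_{m=0}^{K}\beta_m (\hat A^m X)_i\, W$. Hence it is enough to show that $\text{WL}_{K+1}(i) = \text{WL}_{K+1}(j)$ implies $(\hat A^m X)_i = (\hat A^m X)_j$ for every $0 \le m \le K$; the remaining multiplication by the scalars $\beta_m$ and by $W$ then gives $Z_i = Z_j$.

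The heart of the argument is the claim, proved by induction on $m$: for all $m \ge 0$ and all nodes $u,v$, if $\text{WL}_{m+1}(u) = \text{WL}_{m+1}(v)$ then $d_u = d_v$ and $(\hat A^m X)_u = (\hat A^m X)_v$. I use the standard 1-WL update, where $\text{WL}_k(v)$ is an injective encoding of the pair $\bigl(\text{WL}_{k-1}(v), \{\!\{\text{WL}_{k-1}(l) : l\in N(v)\}\!\}\bigr)$ with $\text{WL}_0(v) = X_v$; in particular $\text{WL}_k$ refines $\text{WL}_{k-1}$, and $\text{WL}_1(v)$ determines both $X_v$ and $d_v = |N(v)|$. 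For $m = 0$, equality of $\text{WL}_1$ gives $X_u = X_v$ and equal neighbor multisets, hence $d_u = d_v$ and $(\hat A^0 X)_u = X_u = X_v$. For the step, $\text{WL}_{m+2}(u) = \text{WL}_{m+2}(v)$ yields $d_u = d_v$ (refine down to level $1$) and a bijection $\sigma : N(u)\to N(v)$ with $\text{WL}_{m+1}(l) = \text{WL}_{m+1}(\sigma(l))$; the induction hypothesis gives $d_l = d_{\sigma(l)}$ and $(\hat A^m X)_l = (\hat A^m X)_{\sigma(l)}$, and substituting into
\[ (\hat A^{m+1} X)_v = \frac{1}{\sqrt{d_v}}\sum_{l \in N(v)} \frac{(\hat A^m X)_l}{\sqrt{d_l}} \]
and the analogous identity at $u$ makes the two sums agree term-by-term under $\sigma$, with the prefactors $d_u^{-1/2} = d_v^{-1/2}$ matching.

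To finish, $\text{WL}_{K+1}(i) = \text{WL}_{K+1}(j)$ together with refinement gives $\text{WL}_{m+1}(i) = \text{WL}_{m+1}(j)$ for every $m \le K$, so the claim yields $(\hat A^m X)_i = (\hat A^m X)_j$ for all such $m$, and therefore $\text{LG}_K(i) = Z_i = Z_j = \text{LG}_K(j)$.

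I expect the only delicate point to be the degree bookkeeping, which is precisely what forces the index $K+1$ rather than $K$: $\hat A^m$ reaches nodes at distance $m$, but the symmetric normalization $\hat A_{vl} = (d_v d_l)^{-1/2}$ requires the degrees of those nodes, and a degree is only recoverable after one WL round (it is not encoded in $\text{WL}_0 = X$). Carrying the extra conclusion ``$d_u = d_v$'' inside the inductive hypothesis is what keeps the step clean; the rest is routine.
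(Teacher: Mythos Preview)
Your proof is correct and shares the same core idea as the paper's: both recognize that computing $g(\hat L)X$ can be simulated by $K+1$ rounds of neighborhood aggregation, with the extra round needed to recover the node degrees hidden in the normalization of $\hat A$. The paper packages this by explicitly writing the linear GNN as a $(K{+}1)$-layer message-passing network (carrying the tuple $(D_{ii},\,\text{partial sum},\,X_i)$ through Horner-style COMBINE/AGGREGATE functions) and then invoking Lemma~2 of \citet{HowPowerfulAreGNNs} as a black box. You instead give a self-contained induction on $m$ directly showing that $\text{WL}_{m+1}$ determines both $d_v$ and $(\hat A^m X)_v$. Your route is slightly more elementary in that it avoids the MPNN formalism and the external citation, and it makes the role of the degree bookkeeping (and hence the $K{+}1$ versus $K$ offset) more transparent; the paper's route has the advantage of plugging into a standard framework so the WL comparison is a one-line appeal to a known lemma. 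Substantively the two arguments are the same.
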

Proposition~\ref{prop::linear_vs_WL} means that linear GNNs' expressive power is also bounded by the $1$-WL test: if $1$-WL cannot differentiate two nodes, linear GNNs will also fail. However, this result seems to contradict with the universal approximation property of linear GNNs. We know that: 1) $1$-WL provably cannot discriminate some non-isomorphic nodes (such as nodes in a non-attributed regular graph), and 2) $1$-WL always gives isomorphic nodes the same label. However, a universal linear GNN should be able to give any two nodes different predictions, no matter whether they are isomorphic or not. To close this gap, we study the connections between the universality conditions of linear GNNs and the GI problem. Our results show that the no-multiple-eigenvalue and no-missing-frequency conditions enable $1$-WL to discriminate all non-isomorphic nodes, and also constrain the graph to contain no isomorphic nodes, therefore closing the gap.

We first show that $1$-WL can discriminate all non-isomorphic nodes under the two conditions.
\begin{corollary}\label{coro::1WLPower}
If a graph has no missing frequency component and its normalized Laplacian has no multiple eigenvalues, then $1$-WL can differentiate all non-isomorphic nodes.
\end{corollary}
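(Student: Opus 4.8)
The plan is to prove the corollary by contradiction, feeding the universality theorem (Theorem~\ref{thr::linearexpressive}) into the WL upper bound (Proposition~\ref{prop::linear_vs_WL}). Fix two distinct non-isomorphic nodes $i,j\in\sV$ and suppose, toward a contradiction, that $1$-WL cannot differentiate them. Since the colour partition produced by $1$-WL refines monotonically with the number of iterations and stabilises after at most $n$ rounds, the statement ``$1$-WL cannot differentiate $i$ and $j$'' is equivalent to $\text{WL}_k(i)=\text{WL}_k(j)$ for \emph{every} iteration count $k$.

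Next I would invoke Proposition~\ref{prop::linear_vs_WL}: for any degree $K$ we have $\text{WL}_{K+1}(i)=\text{WL}_{K+1}(j)$, hence $\text{LG}_K(i)=\text{LG}_K(j)$ for every linear GNN using a degree-$K$ filter. Now choose $K$ large enough that Theorem~\ref{thr::linearexpressive} applies on this graph. This is legitimate because the two hypotheses of the corollary (no multiple eigenvalues of $\hat L$, no missing frequency component of $X$) are precisely the hypotheses of Theorem~\ref{thr::linearexpressive}, and enlarging $K$ cannot cause $1$-WL to start separating $i$ and $j$. Finally the contradiction: by Theorem~\ref{thr::linearexpressive} there is a linear GNN whose one-dimensional output equals any prescribed $Z\in\sR^{n}$, so pick $Z$ with $Z_i\neq Z_j$; then $\text{LG}_K(i)=Z_i\neq Z_j=\text{LG}_K(j)$, contradicting the previous step. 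Therefore $1$-WL differentiates $i$ and $j$, and since $i,j$ were arbitrary non-isomorphic nodes, the corollary follows.

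Two structural remarks. First, the argument never actually uses that $i$ and $j$ are non-isomorphic: it shows $1$-WL separates \emph{every} pair of distinct nodes under the two conditions, which, together with the standard fact that isomorphic nodes always receive the same $1$-WL label, additionally forces the graph to contain no non-trivial isomorphic pair (matching the discussion around Theorem~\ref{thr::singleeigen->permutation}); the corollary is thus a by-product of this slightly stronger claim, and one could equally well state and prove that stronger version first.

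The main thing to be careful about is the bookkeeping between the filter degree $K$ and the WL iteration count: Proposition~\ref{prop::linear_vs_WL} couples a degree-$K$ filter to the $(K+1)$-th WL round, so I must make sure the ``high-enough degree'' demanded by universality is still a finite value at which $1$-WL has already stabilised. Since both quantities are finite and bounded in terms of $n$, this is only a compatibility check rather than a genuine obstacle; the proof is essentially a two-line combination of the prior results once this is verified.
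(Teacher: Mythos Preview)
Your proposal is correct and follows exactly the paper's approach: combine the universality of linear GNNs (Theorem~\ref{thr::linearexpressive}) with the WL upper bound (Proposition~\ref{prop::linear_vs_WL}) via the contrapositive. The paper's own proof is the one-line version of what you wrote; your extra care about matching the filter degree $K$ to the WL iteration count and your structural remark that the argument in fact separates \emph{all} distinct nodes (hence forces no non-trivial automorphism, anticipating Theorem~\ref{thr::missingfreq}) are accurate observations that the paper leaves implicit.
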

The other part of the gap is that $1$-WL cannot produce different labels for isomorphic nodes, while linear GNNs with universal approximation property can. Therefore, we analyze how our no-multiple-eigenvalue and no-missing-frequency conditions constrain the graph in Theorems~\ref{thr::singleeigen->permutation} and \ref{thr::missingfreq}.
\begin{theorem}\label{thr::singleeigen->permutation}
For a graph whose normalized Laplacian has no multiple eigenvalues, the order of its automorphism is less than three.
\end{theorem}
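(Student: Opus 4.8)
The plan is to translate the combinatorial automorphism into linear algebra: represent an automorphism $\pi$ by its permutation matrix $P$ (so that $(Px)_{\pi(i)}=x_i$), show that $P$ commutes with $\hat L$, and then use the no-multiple-eigenvalue hypothesis to force $P$ to be symmetric, hence an involution.

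First I would record that an automorphism preserves node degrees: since $\pi$ maps $\mathbb{E}$ bijectively onto itself, $\deg(\pi(i))=\deg(i)$, so $PDP^{T}=D$, and also $PAP^{T}=A$. Because $D$ is diagonal and invertible this gives $PD^{-1/2}P^{T}=D^{-1/2}$, and therefore $P\hat L P^{T}=P(I-D^{-1/2}AD^{-1/2})P^{T}=\hat L$. As a permutation matrix $P$ is orthogonal, so this is exactly $P\hat L=\hat L P$.

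Second, I would exploit that $\hat L$ has a simple spectrum. Writing $\hat L=U\Lambda U^{T}$ with $\Lambda=\mathrm{diag}(\lambda_1,\dots,\lambda_n)$ and the $\lambda_i$ pairwise distinct, each eigenspace $\mathrm{span}(U_{:i})$ is one-dimensional. From $\hat L(PU_{:i})=P\hat L U_{:i}=\lambda_i(PU_{:i})$, the vector $PU_{:i}$ lies in that eigenspace, so $PU_{:i}=c_iU_{:i}$ for a scalar $c_i$; equivalently $P=UCU^{T}$ with $C=\mathrm{diag}(c_1,\dots,c_n)$. Now $C=U^{T}PU$ is a product of real matrices, hence real; since $C$ is diagonal, $P^{T}=UC^{T}U^{T}=UCU^{T}=P$. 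So $P$ is a symmetric permutation matrix, and combining symmetry with orthogonality yields $P^{2}=PP^{T}=I$, i.e. $\pi^{2}=e$. Hence every automorphism of the graph has order $1$ or $2$, which is less than three.

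I do not anticipate a serious obstacle; the theorem is short once the hypothesis is used in the right place. The two points I would state explicitly are (i) that commuting with a matrix of simple spectrum forces $P$ to be simultaneously diagonalized in the real orthonormal eigenbasis $U$ of $\hat L$, and (ii) that a real matrix diagonalized by a real orthogonal matrix is automatically symmetric — both elementary, but they are where ``no multiple eigenvalues'' does all the work. As an alternative to (ii), one can note that orthogonality of $P$ together with reality of the $c_i$ forces $c_i\in\{\pm1\}$, so $C^{2}=I$ and again $P^{2}=I$.
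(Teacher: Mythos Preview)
Your proposal is correct and follows essentially the same route as the paper's proof: both pass from the automorphism to its permutation matrix $P$, use $P\hat L P^{T}=\hat L$ together with the simple spectrum of $\hat L$ to conclude that $U^{T}PU$ is a real diagonal matrix (with entries $\pm 1$), whence $P$ is symmetric and $P^{2}=I$. Your write-up is slightly more explicit in justifying $P\hat L=\hat L P$ via degree preservation, but the argument is the same.
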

The above theorem relates multiple eigenvalue to the degree of symmetry of the graph. A highly symmetric graph with three- or higher-order automorphism always has multiple eigenvalues. To intuitively understand this, we show two toy graphs in Figure~\ref{fig::permutation_singleeigen}.
% More intuitively, an eigenvector can be permuted to produce another linearly independent eigenvector with the same eigenvalue for graphs with high symmetry, leading to multiple eigenvalues. 
%Take the toy graph in Figure~\ref{fig::permutation_singleeigen} as an example. 
The triangle (b) has a three-order automorphism and an eigenvector can be permuted to produce another linearly independent eigenvector, while for the two-node graph (a) with only two-order automorphism, permuting its eigenvector results in the same eigenvector. Since real-world graphs are often highly irregular, Theorem~\ref{thr::singleeigen->permutation} partly explains why multiple eigenvalues are rare in practice.
\begin{figure}[ht]
\vskip -0.1in
\begin{center}
%\framebox[4.0in]{$\;$}
\centerline{\includegraphics[scale=0.45]{ 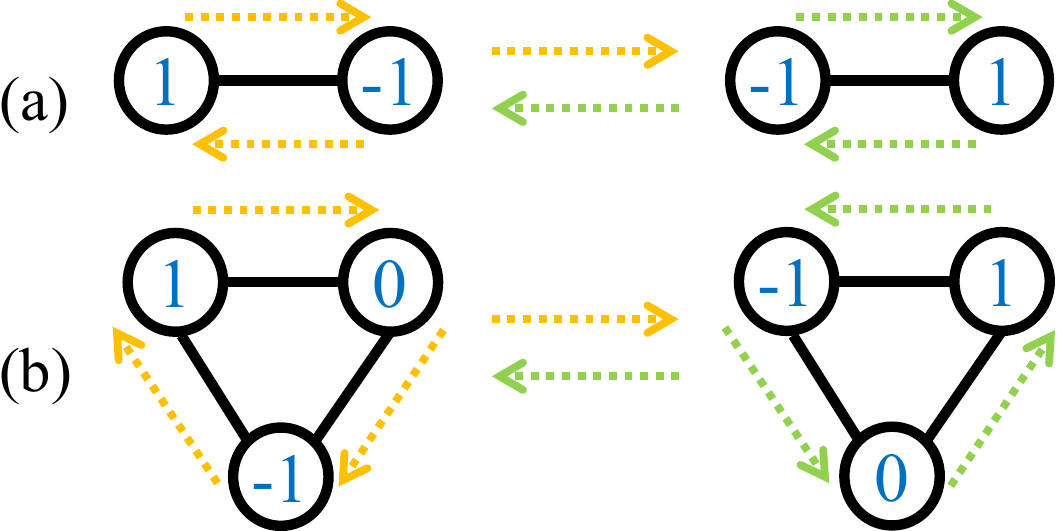}}
\end{center}
\vskip -0.4in
\caption{Graphs with high-order automorphisms (symmetries) have multiple Laplacian eigenvalues.}\label{fig::permutation_singleeigen}
\vskip -0.1in
\end{figure}

When considering node features containing all frequency components, all pairs of nodes are non-isomorphic, thus closing the gap between $1$-WL and linear GNN.
\begin{theorem}\label{thr::missingfreq}
Suppose a graph with node features does not have multiple eigenvalues in its normalized Laplacian, and no frequency component is missing from the node features. There will be no automorphism for the graph other than the identical mapping.
\end{theorem}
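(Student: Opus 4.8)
The plan is to encode an automorphism as a permutation matrix and show that the two spectral hypotheses force that matrix to be the identity. Let $\pi$ be an automorphism of $\gG=(\sV,\mathbb{E},X)$ and let $P$ be its permutation matrix, so that $Pe_i=e_{\pi(i)}$. From the definitions in the Preliminaries, $\pi$ preserving the edge set (hence the degrees) gives $PAP^T=A$ and $PDP^T=D$, so $P$ commutes with $\hat L = I - D^{-1/2}AD^{-1/2}$; and $\pi$ preserving the node features gives $PX=X$. The goal is then to conclude $P=I$, i.e.\ $\pi=e$.

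First I would exploit the no-multiple-eigenvalue hypothesis. Write $\hat L=U\Lambda U^T$; since the $n$ eigenvalues are distinct, every eigenspace is a line spanned by the corresponding column $U_{:\lambda}$ of $U$. Because $P\hat L = \hat L P$, the matrix $P$ maps the $\lambda$-eigenspace into itself, so $PU_{:\lambda}=c_\lambda U_{:\lambda}$ for some scalar $c_\lambda$; as $P$ is orthogonal and $U_{:\lambda}$ is real of unit norm, $c_\lambda\in\{-1,+1\}$. Collecting these relations over all eigenvalues, $U^TPU=\mathrm{diag}(c_\lambda)$.

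Next I would bring in the no-missing-frequency hypothesis to pin down the signs. Multiplying $PX=X$ on the left by $U^T$ and substituting $X=U\tilde X$ yields $\mathrm{diag}(c_\lambda)\,\tilde X=\tilde X$, i.e.\ $c_\lambda\tilde X_\lambda=\tilde X_\lambda$ for every eigenvalue $\lambda$. Since no frequency component is missing, $\tilde X_\lambda\neq\vec 0$ for each $\lambda$, and therefore $c_\lambda=1$ for all $\lambda$. Hence $PU=U$, and as $U$ is invertible, $P=I$; so the only automorphism of $\gG$ is the identity mapping.

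The argument is short, so the main care needed is bookkeeping: one must verify cleanly that a feature-preserving graph automorphism really does commute with $\hat L$ and fix $X$ (immediate from the permutation conventions in the Preliminaries) and keep the transpose/index conventions consistent throughout. Conceptually, the only real content is the observation that the no-missing-frequency condition is precisely what rules out the $c_\lambda=-1$ case — this is the ingredient that upgrades Theorem~\ref{thr::singleeigen->permutation} (whose bound still permits order-two automorphisms, corresponding exactly to eigenvector sign flips) to the full rigidity asserted here. I do not anticipate any deeper obstacle.
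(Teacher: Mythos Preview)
Your proposal is correct and follows essentially the same route as the paper's own proof: encode the automorphism as a permutation matrix $P$, deduce from the simple-spectrum hypothesis that $U^TPU$ is a diagonal matrix with entries $\pm 1$, and then use the no-missing-frequency condition to force every sign to be $+1$, hence $P=I$. Your write-up is in fact slightly more careful than the paper's in justifying why $P$ commutes with $\hat L$ and why the $c_\lambda$ lie in $\{\pm 1\}$.
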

Therefore, the conditions of Theorem~\ref{thr::linearexpressive} constrain the graph topology and node features so that $1$-WL still bounds the expressive power of linear GNNs. On the other hand, our results indicate that $1$-WL can be quite powerful given \textbf{expressive node features and irregular graph structures}. Our results build a bridge between the expressive power of spectral GNNs (in terms of universality under some conditions) and spatial GNNs (in terms of $1$-WL test). As an analysis example, we also discuss how random features can boost the expressive power and why models with random features have poor empirical performance in Appendix~\ref{app::randDiscussion}. \citet{wang2022equivariant} also relates multiple eigenvalues to stability of positional encoding.

\subsection{Role of Nonlinearity}
Though linear GNNs have strong theoretical expressive power and remarkable empirical performance, various existing state-of-the-art GNNs utilize nonlinear activation functions. In this section, we analyze the role of nonlinearity. 

In linear GNNs, the $\lambda$ frequency component of the prediction $\tilde Z_\lambda$ is a function of only $g(\lambda)$, $\tilde X_\lambda$ and $W$. However, for nonlinear GNNs, different frequency components can transformed to each other. Figure~\ref{fig::nonlinear} is an example, where new frequency components emerge after ReLU activation. Consider an element-wise activation function $\sigma$ over the spatial signal $X$. We investigate its equivalent effect $\sigma'$ over the spectral signal $\tilde X$. Its function on a spectral signal is $\sigma'(\tilde X)=U^T\sigma(U\tilde X)$, meaning that different frequency components are first mixed via $U$, then nonlinearly transformed via $\sigma$ element-wisely, and finally distributed back to each frequency via $U^T$. Thus, $\sigma'$ is a column-wise nonlinear function over all frequency components. Mixing different frequency components may alleviate the issues from multiple eigenvalues and missing frequency components. However, such a mix is not expressive enough to solve all the problems, as $1$-WL still bounds the expressive power of GNNs. Furthermore, since the universality conditions are easily satisfied by real-world graphs to a large degree (Appendix~\ref{app::datasets}), we desert nonlinearity in our experiments.
\begin{figure}[ht]
\vskip -0.1in
\begin{center}
\centerline{\includegraphics[scale=0.42]{ 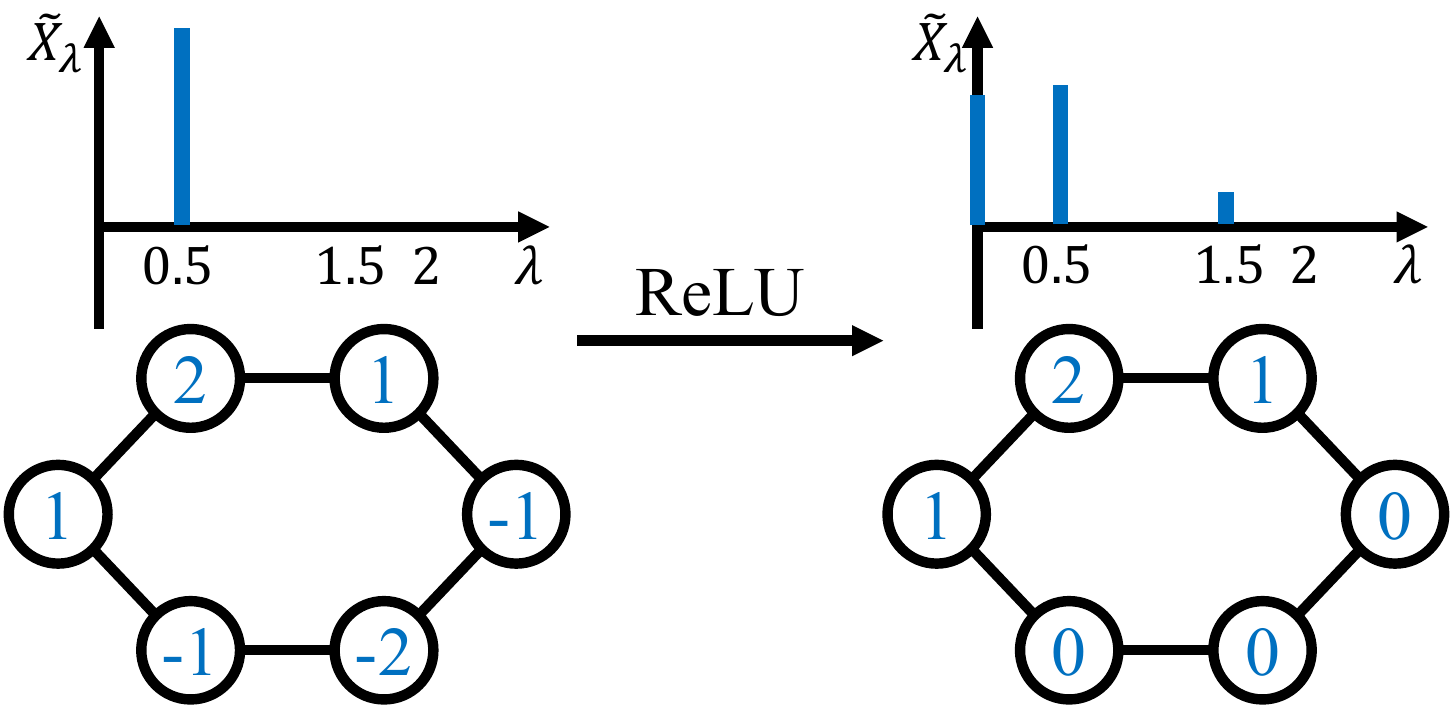}}
\end{center}
\vskip -0.4in
\caption{Nonlinear functions can mix different frequency components. }\label{fig::nonlinear}
\vskip -0.2in
\end{figure}

\subsection{Role of Bias}\label{sec::bias}
Bias is usually used together with a linear transformation. However, there is no need to discuss bias separately because,
\begin{equation}
    XW+b=\begin{bmatrix}X&1_n\end{bmatrix} \begin{bmatrix}W\\b\end{bmatrix},
\end{equation}
where row vector $b$ is the learnable bias, $1_n$ is a column vector whose all elements are $1$. Using bias is equivalent to adding $1$ to the features of each node and thus still keeps the linear GNN form. Therefore, we ignore bias in theoretical analysis, but by default turn on bias in experiments for better performance. As bias introduces extra graph signals, a natural question is whether bias can complete the frequency components missing from original node features. The answer is no. Please see Appendix~\ref{app::bias} for more details.

\section{Choice of Basis for Polynomial Filters}\label{sec::opt}
Assume the polynomial bases are $g_k(\lambda), k=0,1,2,...$ In this section, we discuss linear GNNs with individual filter parameters for each output dimension, which is formulated as
\begin{equation}
% Z_{:l}=\sum_{k=0}^{K-1}\alpha_{kl}g_k(\hat L)(XW+b)_{:l},
Z_{:l}=\sum_{k=0}^{K}\alpha_{kl}g_k(\hat L)XW_{:l}
\end{equation} 
where $\alpha_{kl}$ is the coefficient of polynomial filter basis $g_k(\hat L)$ and $(XW)_{:l}$ is the transformed node features for the $l^{\text{th}}$ output dimension $Z_{:l}$. 

All complete polynomial bases can build PFME models. However, models with different bases show different empirical performance. This section analyzes the effect of polynomial basis from an optimization perspective, which motivates the use of Jacobi Polynomials in our model.

\subsection{Hessian Matrix and Polynomial Basis}
Following the setting in~\citep{Optim}, we study linear GNNs trained with the squared loss $R=\frac{1}{2}||Z-Y||_F^2$, where $Y$ is the target. Assuming that linear GNNs can converge to the global minimum, we study the convergence speed near the global minimum. The rationality of this assumption is discussed in Appendix~\ref{app::dynamics}.

When considering the optimization of a linear GNN, both $\alpha$ and $W$ are learnable parameters. However, the gradient of loss over $W$ is a function of the learnable filter function $g_{:l}(\hat L):=\sum_{k}\alpha_{kl}g_k(\hat L)$ as a whole.
\begin{equation}
\begin{aligned}
%\frac{\partial R}{\partial W_{jl}}&= 2\sum_{i=1}^n\big[(g_{:l}(\hat L)XW_{:l})_i-Y_{il}\big]\big[g_{:l}(\hat L)X_{:j}\big]_i,
\frac{\partial R}{\partial W_{jl}}&= \big[g_{:l}(\hat L)(XW)_{:l}-Y_{:l}\big]^T\big[g_{:l}(\hat L)X_{:j}\big],
\end{aligned}
\end{equation}
According to our assumption, the learned filter function is approximately the same for different bases as they have the same expressive power and can all converge to the global minimum. So the optimization of $W$ is irrelevant to the choice of basis near the global minimum. However, the optimization of $\alpha$ heavily depends on the basis choice. To focus on the effect of basis choice, we only analyze the optimization of $\alpha$ by merging $W$ into $X$.

Consider the optimization w.r.t. $\alpha$. The loss is a convex function, and the gradient descent's convergence rate depends on the Hessian matrix's condition number~\citep{ConvexOpt}. Therefore, we analyze the Hessian matrix of linear GNNs near the global minimum.

%We start from a simple case that the output is only one-dimensional. In this case we can neglect the index $l$. 
Since the total loss is summed over different output dimensions, and each output dimension adopts a different set of polynomial coefficients $\alpha_{kl}$, we can analyze the Hessian w.r.t. each dimension independently. Ignoring $l$, the $(k_1,k_2)$ element of the Hessian matrix $H$ can be written as
\begin{equation}
\begin{aligned}
\frac{\partial R}{\partial\alpha_{k_1}\partial\alpha_{k_2}}
&=X^Tg_{k_2}(\hat L)g_{k_1}(\hat L)X\\
&=\sum_{i=1}^n g_{k_2}(\lambda_i)g_{k_1}(\lambda_i)\tilde X_{\lambda_i}^2.
\end{aligned}
\end{equation}
It can be equivalently expressed as a Riemann sum:
\begin{equation}
\begin{aligned}
\sum_{i=1}^n g_{k_2}(\lambda_i)g_{k_1}(\lambda_i) \frac{F(\lambda_i)-F(\lambda_{i-1})}{\lambda_i-\lambda_{i-1}}(\lambda_i-\lambda_{i-1}),\\
\end{aligned}
\end{equation}
where $F(\lambda):=\sum_{\lambda_i\leq\lambda} \tilde X_{\lambda_i}^2$ is the accumulated amplitude of signal with frequency lower than $\lambda$. Define $f(\lambda)=\frac{\Delta F(\lambda)}{\Delta \lambda}$, which is the density of signal at frequency $\lambda$. In the limit when $n \rightarrow \infty$, we have:
\begin{equation}
\begin{aligned}
H_{k_1k_2}
&=\int_{\lambda=0}^{2} g_{k_1}(\lambda)g_{k_2}(\lambda)f(\lambda) \dd\lambda.\\
\end{aligned}
\end{equation}

The condition number $\kappa(H)$ reaches minimum if $H$ is an identity matrix, which is equivalent to that $g_k$'s form an orthonormal basis in the polynomial space whose inner product is defined by $\langle h,g\rangle=\int_{0}^{2} h(\lambda)g(
\lambda) f(\lambda)\dd\lambda$ with $f(\lambda)$ being the weight function. 

Our results show that although all complete polynomial bases have the same expressive power, using a set of orthonormal bases $g_k$ whose weight function corresponds to the graph signal density can enable linear GNNs to achieve the highest convergence rate. As the normalization of bases is straightforward, we only consider orthogonality in the analysis.

Given the weight function $f(\lambda)$, we can construct an orthonormal basis using the Gram-Schmidt process. However, the exact form of the weight function $f$ depends on the eigendecomposition of $\hat L$ and cannot be calculated efficiently and accurately for large graphs. Therefore, we choose a \textbf{general form} of orthogonal polynomials with \textbf{flexible enough} weight functions to adapt to different graph signal density functions $f(\lambda)$.

\subsection{Jacobi Polynomial Bases}
Among orthogonal polynomials, the Jacobi basis has a very general form, whereas the Chebyshev basis is a special case. The Jacobi basis $P_k^{a,b}$ has the following form.
\begin{equation}
\begin{aligned}
P_0^{a,b}(z)&=1,\\
P_1^{a,b}(z)&=\frac{a-b}{2}+\frac{a+b+2}{2}z.\\
\end{aligned}
\end{equation}
For $k\ge 2$.
% \begin{equation}
\begin{align}
P_k^{a,b}(z)&=(\theta_{k} z+\theta'_{k}) P_{k-1}^{a,b}(z)
-\theta''_{k} P_{k-2}^{a,b}(z),
\end{align}
% \end{equation}
where 
\begin{equation}
\begin{aligned}
\theta_{k}&=\frac{(2k+a+b)(2k+a+b-1)}{2k(k+a+b)},\\
\theta'_{k}&=\frac{(2k+a+b-1)(a^2-b^2)}{2k(k+a+b)(2k+a+b-2)},\\
\theta''_{k}&=\frac{(k+a-1)(k+b-1)(2k+a+b)}{k(k+a+b)(2k+a+b-2)}.
\end{aligned}
\end{equation}
$P_k^{a,b}, k=0,1,2,...$ are orthogonal w.r.t. the weight function $(1-\lambda)^a(1+\lambda)^b$ on $[-1,1]$. We can define the Jacobi basis for graphs as $g_k(\hat L)=P_{k}^{a,b}(I-\hat L)=P_{k}^{a,b}(\hat A)$.

\subsection{A Discussion on Popular Filter Bases}\label{sec::anal_base}
In this section, we compare three popular polynomial bases with the Jacobi Polynomial: Monomial $(1-\lambda)^k$, Chebyshev $P_k^{-1/2,-1/2}(1-\lambda)$, %Legendre Polynomial $P_k^{1,1}(1-\lambda)$, 
and Bernstein $\tbinom{K}{k}(1-\frac{\lambda}{2})^{K-k}(\frac{\lambda}{2})^k$. These bases are visualized in Appendix~\ref{app::basesplot}.

For the Monomial basis, we can prove that it cannot be orthogonal on any weight function. 
\begin{proposition}\label{prop::power_non-ortho}
On any weight function $f(\lambda)$ which fulfils the requirements of the inner product, the Monomial basis is not orthogonal.
\end{proposition}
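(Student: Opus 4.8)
The plan is to exploit the special multiplicative structure of the monomial basis. Write $g_k(\lambda) := (1-\lambda)^k$, so that a linear GNN with this basis uses filters spanned by $g_0, g_1, g_2, \dots$, and the inner product in question is $\langle h, g\rangle = \int_0^2 h(\lambda)g(\lambda) f(\lambda)\,\dd\lambda$. The one observation that drives everything is the identity $g_i(\lambda)\, g_j(\lambda) = g_{i+j}(\lambda)$: the value of $\langle g_i, g_j\rangle$ depends only on the sum $i+j$, not on $i$ and $j$ individually. (Equivalently, after the substitution $z = 1-\lambda$ this is just the statement that powers $z^k$ are never orthogonal on $[-1,1]$.)

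First I would record the immediate consequence of this identity for the indices $(0,2)$ versus $(1,1)$:
\begin{equation}
\langle g_0, g_2\rangle = \int_0^2 (1-\lambda)^2 f(\lambda)\,\dd\lambda = \langle g_1, g_1\rangle .
\end{equation}
Now suppose, for contradiction, that $f$ is a weight function meeting the requirements of the inner product and that $\{g_k\}_{k\ge 0}$ is orthogonal with respect to it. Orthogonality applied to the pair $(g_0,g_2)$ gives $\langle g_0, g_2\rangle = 0$, hence $\int_0^2 (1-\lambda)^2 f(\lambda)\,\dd\lambda = 0$.

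On the other hand, a valid weight function makes $\langle\cdot,\cdot\rangle$ a genuine positive definite inner product on the polynomial space, so $\langle h, h\rangle > 0$ for every nonzero polynomial $h$; in particular $\langle g_1, g_1\rangle > 0$ since $g_1(\lambda) = 1-\lambda$ is not the zero polynomial. Together with the displayed identity this yields $0 = \langle g_0, g_2\rangle = \langle g_1, g_1\rangle > 0$, a contradiction. (If one only wishes to assume $f\ge 0$ with $\int_0^2 f > 0$ rather than full positive definiteness, the same conclusion follows: $\int_0^2 (1-\lambda)^2 f(\lambda)\,\dd\lambda = 0$ forces $f=0$ almost everywhere on $[0,2]\setminus\{1\}$, hence $\int_0^2 f = 0$, again a contradiction.)

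There is essentially no hard step here: once the identity $\langle g_0, g_2\rangle = \langle g_1, g_1\rangle$ is spotted, the proof is a two-line contradiction, and it is robust to exactly how "weight function" is formalized. The only point that needs care in the write-up is to state precisely which properties of the inner product are being invoked — either positive definiteness, or, more weakly, nonnegativity together with nontriviality of $f$ — since both routes close the argument and it is worth making the hypothesis explicit.
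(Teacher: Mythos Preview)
Your proof is correct and is essentially the same as the paper's: both exploit the identity $g_0\cdot g_2 = g_1^2$ to conclude $\langle g_0, g_2\rangle = \langle g_1, g_1\rangle$, and then derive a contradiction between the orthogonality requirement $\langle g_0, g_2\rangle = 0$ and the positive-definiteness requirement $\langle g_1, g_1\rangle \neq 0$. The paper's version is terser (writing the basis as $1, x, x^2$ rather than $(1-\lambda)^k$), but the argument is identical, and your added remark that the weaker hypothesis $f\ge 0$, $\int f>0$ also suffices is a nice touch.
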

Chebyshev basis is a particular case of Jacobi basis and is only orthogonal w.r.t. a specific weight function. In contrast, the Jacobi basis can adapt to a wide range of weight functions. 

For non-orthogonal bases such as Bernstein, the Hessian matrix might not be diagonal, but a small condition number may still be achieved. In Appendix~\ref{app::Gram-Hessian}, we build a connection between the condition number of polynomial regression's Gram matrix using basis $g_k,k=0,1,2,.., K$ and that of linear GNNs' Hessian matrix. Therefore, some existing conclusions from polynomial regression basis choice can still be used. For example, existing studies show that the Bernstein basis can also achieve a lower condition number than the Monomial basis~\citep{BernLR}. Though both Bernstein and Jacobi basis can outperform Monomial, Jacobi basis can perform better if the weight function of Jacobi basis well approximates the data distribution. Our experiments find that the Jacobi basis outperforms the Bernstein basis on both synthetic and real-world datasets.

\iffalse
\begin{table}[]
\setlength\tabcolsep{1pt}
\label{tab::runtime}
\caption{\#parameters/per-epoch time (ms)/total training time (s).}
\begin{center}
\begin{small}
\begin{sc}
\begin{tabular}{cccccc}
\hline
Datasets        & JacobiConv & APPNP  & BernNet & GPRGNN \\ \hline
cora &10126/6.43/1.43 &92231/3.58/1.15 &92256
/10.79/2.90 &92242/4.30/0.93 \\
citeseer &22301/6.32/1.49 &237446/3.66/1.29 &237469/11.39/3.31 &237457/4.48/0.96 \\
pubmed &1547/6.56/1.64 &32259/3.92/1.31 &32276/11.13/4.92 &32270/4.54/1.80 \\
computers &7801/7.00/1.73 &49802/6.01/2.51 &49833/29.32/8.60 &49813/6.49/1.64 \\
photos &6067/9.24/2.51 &48264/5.76/2.77 & 48291/15.35/6.17 &48275/6.99/2.01\\
chameleon &11696/6.50/1.56 &149189/3.93/0.79 &149210/10.97/2.79 &149200/4.37/1.02 \\
actor &4731/6.50/1.56 &60037/3.83/0.77 &60058/10.91/3.46 &60049/4.33/0.93 \\
squirrel &10516/6.28/1.57 &134085/4.27/0.86 &134106/15.70/4.91 &134096/4.30/2.10 \\
texas &8586/6.60/1.15 &109381/3.76/0.77 &109402/11.30/2.41 &109392/4.26/1.00 \\
cornell &8586/6.52/0.95 &109381/3.77/0.77 &109402/10.98/2.27 &109392/4.35/0.93 \\ \hline
\end{tabular}
\end{sc}
\end{small}
\end{center}
\vskip -0.1in
\end{table}
\fi
\section{JacobiConv Architecture}\label{sec::arch}
In this section, we describe our JacobiConv architecture. As the dimension of node features $X$ is often much larger than that of the transformed features $\hat X$, we first feed $X$ into a linear layer, $\hat X=XW+b$, with bias (see Section~\ref{sec::bias}), and then filter $\hat X$. There are three techniques used in the filter: multiple filter functions, Jacobi basis, and a novel polynomial coefficient decomposition (PCD) technique.
%As shown in Section~\ref{sec::bias}, we add a bias to linear GNN implementation. 
\subsection{Multiple Filters} 
Motivated by our analysis in Section~\ref{sec:multiple_output_dim}, we adopt an individual filter function for each output dimension. The JacobiConv can be formulated as
\begin{equation}
\begin{aligned}
Z_{:l}=\sum_{k=0}^{K}\alpha_{kl}P_k^{a,b}(\hat A)\hat X_{:l}.
\end{aligned}
\end{equation}
\subsection{Computation of Jacobi Basis}
With the recursion formula of Jacobi basis, we can compute all bases in $O(K)$ time and do $K$ message passing operations.
\begin{equation}
\begin{aligned}
P_0^{a,b}(\hat A)\hat X&=\hat X,\\
P_1^{a,b}(\hat A) \hat X&=\frac{a-b}{2}\hat X+\frac{a+b+2}{2}\hat A\hat X.\\
\end{aligned}
\end{equation}
For $k\ge 2$,
\begin{equation}
\begin{aligned}
P_k^{a,b}(\hat A)\hat X&=\theta_{k}\hat{A} P_{k-1}^{a,b}(\hat A)\hat X
+\theta'_{k}P_{k-1}^{a,b}(\hat A)\hat X\\
&-\theta''_{k}P_{k-2}^{a,b}(\hat A)\hat X.
\end{aligned}
\end{equation}
\subsection{Polynomial Coefficient Decomposition} 
The filter function we construct can be formulated as $\sum_{k=0}^{K} \alpha_{kl} P_{k}^{a,b}$. We find that in real-world datasets $\alpha_{kl}$ gets smaller as $k$ gets higher. As $\alpha_{kl}$'s have different magnitudes, the optimization can be hard. So we decompose $\alpha_{kl}$ to $\beta_{kl}\prod_{i=1}^k \gamma_i$, where $\gamma_i$'s are shared among different output channels. And we set $\gamma_i=\gamma'\tanh{\eta_i}$, which enforces $\gamma_i\in [-\gamma', \gamma']$. We call this technique Polynomial Coefficient Decomposition (PCD). We can modify the recursion formula to implement PCD.
\begin{align}
P_k^{a,b}(\hat A)\hat X&=\gamma_{k}\theta_{k}\hat{A} P_{k-1}^{a,b}(\hat A)\hat X
+\gamma_{k}\theta'_{k}P_{k-1}^{a,b}(\hat A)\hat X \nonumber \\
&-\gamma_{k}\gamma_{k-1}\theta''_{k}P_{k-2}^{a,b}(\hat A)\hat X.
\end{align}
\section{Experiment}
In this section, we first conduct experiments on synthetic datasets to examine JacobiConv's ability to express filter functions, and then test JacobiConv on real-world datasets. Our code is
available at \url{https://github.com/GraphPKU/JacobiConv}.
\subsection{Evaluating Models on Learning Filters}
Following \citet{BernNet}, we transform real images to 2D regular 4-neighbor grid graphs, whose nodes are pixels. We apply 5 spectral filters (low $e^{-10\lambda^2}$, high $1-e^{-10\lambda^2}$, band $e^{-10(\lambda-1)^2}$, reject $1-e^{-10(\lambda-1)^2}$, and comb $|\sin \pi\lambda|$) to the signal in each image. All models use original graph signal as node features to fit the filtered signal. 

\begin{table}[t]
\centering
%\vskip -0.05in
\caption{Average of sum of squared loss over 50 images.}\label{tab::filter}
\vskip 0.1in
\begin{center}
\begin{small}
\begin{sc}
\setlength{\tabcolsep}{1mm}
{\begin{tabular}{lccccc}
\hline
           & Low      & High     & Band     & Reject & Comb           \\ \hline
GPRGNN    & $0.4169$ & $0.0943$ & $3.5121$ & $3.7917$  & $4.6549$ \\
ARMA       & $1.8478$ & $1.8632$ & $7.6922$ & $8.2732$ & $15.1214$ \\
ChebyNet   & $0.8220$ & $0.7867$ & $2.2722$ & $2.5296$  & $4.0735$  \\
BernNet    & $0.0314$ & $0.0113$ & $0.0411$ & $0.9313$ & $0.9982$  \\
\hline
JacobiConv	&$\bf{0.0003}$&$\bf{0.0011}$&$\bf{0.0213}$&$\bf{0.0156}$&$\bf{0.2933}$\\\hline
Monomial	&$2.4076$	&$4.2411$	&$10.8856$	&$8.7031$	&$10.5596$\\
Chebyshev	&$0.9227$	&$2.3198$	&$7.7751$	&$6.0065$	&$9.1191$	\\
Bernstein	&$0.0110$	&$0.0058$	&$0.1517$	&$0.1607$	&$0.5705$	\\
\hline
\end{tabular}}
\end{sc}
\end{small}
\end{center}
\vskip -0.05in
\end{table}

We report the average squared error (lower the better) over the 50 pictures. Results are shown in Table~\ref{tab::filter}. For a fair comparison, we remove PCD from linear GNNs. 

We compare JacobiConv with popular PFME GNNs: GPRGNN~\citep{GPRGNN}, ARMA~\citep{ARMA}, BernNet~\citep{BernNet}, and ChebyNet~\citep{ChebyConv}. Settings of these models are detailed in Appendix~\ref{app::experimentsetting}. JacobiConv outperforms other models on all datasets and even achieves up to $50$ times lower loss on two datasets: Low and Reject. Though all these models can learn arbitrary polynomial filters, JacobiConv has better optimization properties as it uses orthogonal filter bases that can adapt to a wide range of signal distributions.

We also compare linear GNNs with different bases. The results are shown in the lower part of Table~\ref{tab::filter}. Jacobi basis still outperforms other bases on all datasets and achieves $10$ times lower loss than any other basis. Bernstein basis also achieves lower loss than Monomial on all datasets, which verifies our analysis in Section~\ref{sec::anal_base}.

\begin{figure}[t]
\begin{center}
\centerline{\includegraphics[scale=0.25]{ 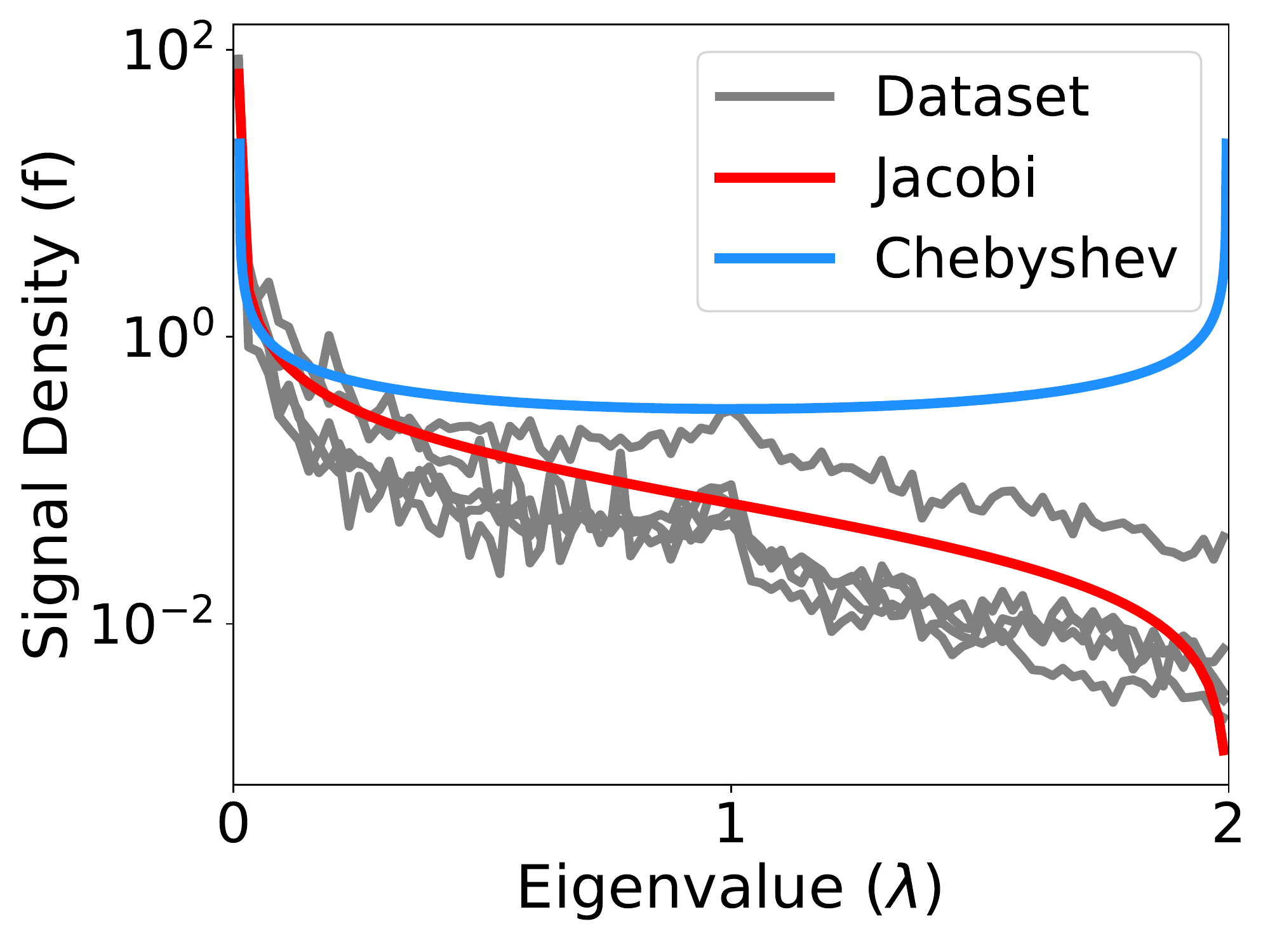}}
\vskip -0.1in
\caption{Signal density functions of some graphs in the image dataset and the weight functions of some bases.}\label{fig::sigplot}
\end{center}
\vskip -0.4in
\end{figure}

To verify that Jacobi basis adapts to the dataset, we plot the signal distributions of some randomly selected graphs in our image dataset and the weight functions of Jacobi and Chebyshev bases in Figure~\ref{fig::sigplot}. We can see that only Jacobi basis (with hyperparameters selected for minimizing loss) can capture the main shape of the signal distribution, compared to Chebyshev basis.

Experimental results of models with PCD on synthetic datasets are shown in Appendix~\ref{app::filter+PCD}. JacobiConv still outperforms any other model on all datasets. Jacobi basis also achieve a higher convergence rate than other bases for linear GNN. See Appendix~\ref{app::ImgOptim} for the convergence rate. 

\subsection{Evaluation on Real-World Datasets}
For homogeneous graphs, we include three citation graph datasets, Cora, CiteSeer and PubMed~\citep{Cora}, and two Amazon co-purchase graphs, Computers and Photo~\citep{Photo}. We also use heterogeneous graphs, including Wikipedia graphs Chameleon and Squirrel~\citep{Chameleon}, the Actor co-occurrence graph, and the webpage graph Texas and Cornell from WebKB3~\citep{Texas}. Their statistics are listed in Appendix~\ref{app::datasets}. We perform the node classification task, where we randomly split the node set into train/validation/test sets with a ratio of $60\%/20\%/20\%$. JacobiConv is compared with spectral GNNs: GCN, APPNP, ChebyNet, GPRGNN, and BernNet. Note that all these baselines use \textbf{nonlinear} transformations, while JacobiConv is a purely \textbf{linear} model. Results are shown in Table~\ref{tab::real}. Settings of these models are detailed in Appendix~\ref{app::experimentsetting}.

\begin{table*}[t]
\centering
\caption{Results on real-world datasets: Mean accuracy (\%) $\pm$ $95$\% confidence interval.}\label{tab::real}
\vskip 0.1in
\begin{center}
\begin{small}
\begin{sc}
\resizebox{0.7\textwidth}{!}{
\setlength{\tabcolsep}{1mm}
\begin{tabular}{lcccccccc}
\hline
Datasets  & GCN             & APPNP           & ChebyNet        & GPRGNN         & BernNet              & JacobiConv \\
\hline
Cora      & $87.14_{\pm 1.01}$ & $88.14_{\pm 0.73}$ & $86.67_{\pm 0.82}$ & $88.57_{\pm 0.69}$ & $88.52_{\pm 0.95}$      & $\bf{88.98_{\pm 0.46}}$  \\
Citeseer  & $79.86_{\pm 0.67}$ & $80.47_{\pm 0.74}$ & $79.11_{\pm 0.75}$ & $80.12_{\pm 0.83}$ & $80.09_{\pm 0.79}$      & $\bf{80.78_{\pm 0.79}}$\\
pubmed    & $86.74_{\pm 0.27}$ & $88.12_{\pm 0.31}$ & $87.95_{\pm 0.28}$ & $88.46_{\pm 0.33}$ & $88.48_{\pm 0.41}$      & $\bf{89.62_{\pm 0.41}}$ \\
Computers & $83.32_{\pm 0.33}$ & $85.32_{\pm 0.37}$ & $87.54_{\pm 0.43}$ & $86.85_{\pm 0.25}$ & $87.64_{\pm 0.44}$      & $\bf{90.39_{\pm 0.29}}$ \\
Photo     & $88.26_{\pm 0.73}$ & $88.51_{\pm 0.31}$ & $93.77_{\pm 0.32}$ & $93.85_{\pm 0.28}$ & $93.63_{\pm 0.35}$      & $\bf{95.43_{\pm 0.23}}$ \\
Chameleon & $59.61_{\pm 2.21}$ & $51.84_{\pm 1.82}$ & $59.28_{\pm 1.25}$ & $67.28_{\pm 1.09}$ & $68.29_{\pm 1.58}$      & $\bf{74.20_{\pm 1.03}}$ \\
Actor     & $33.23_{\pm 1.16}$ & $39.66_{\pm 0.55}$ & $37.61_{\pm 0.89}$ & $39.92_{\pm 0.67}$ & $\bf{41.79_{\pm 1.01}}$ & $41.17_{\pm 0.64}$\\
Squirrel  & $46.78_{\pm 0.87}$ & $34.71_{\pm 0.57}$ & $40.55_{\pm 0.42}$ & $50.15_{\pm 1.92}$ & $51.35_{\pm 0.73}$      & $\bf{57.38_{\pm 1.25}}$\\
Texas     & $77.38_{\pm 3.28}$ & $90.98_{\pm 1.64}$ & $86.22_{\pm 2.45}$ & $92.95_{\pm 1.31}$ & $93.12_{\pm 0.65}$      & $\bf{93.44_{\pm 2.13}}$\\
Cornell   & $65.90_{\pm 4.43}$ & $91.81_{\pm 1.96}$ & $83.93_{\pm 2.13}$ & $91.37_{\pm 1.81}$ & $92.13_{\pm 1.64}$      & $\bf{92.95_{\pm 2.46}}$ \\
\hline
\end{tabular}
}
\end{sc}
\end{small}
\end{center}

\end{table*}
% \vskip -0.2in
\begin{table*}[t]
\vskip -0.1in
\centering
\caption{Results of ablation study on real-world datasets: Mean accuracy (\%) $\pm$ $95$\% confidence interval.}\label{tab::abl}
% \vskip 0.15in
\begin{center}
\begin{small}
\begin{sc}
\resizebox{1\textwidth}{!}{
\setlength{\tabcolsep}{1mm}{
\begin{tabular}{lcccc|cccccc}
\hline
Datasets & Monomial & Chebyshev & Bernstein & Jacobi & JacobiConv & UniFilter & No-PCD &NL-Res &NL\\
\hline
Cora 
& $88.80_{\pm0.67}$ & $88.49_{\pm0.82}$ & $86.50_{\pm1.26}$ & $\bf{88.98_{\pm0.72}}$ 
& $88.98_{\pm0.46}$ & $\bf{89.05_{\pm0.48}}$ & $88.98_{\pm0.72}$ & $89.00_{\pm0.61}$ & $88.67_{\pm0.69}$ \\
Citeseer
& $\bf{80.68_{\pm0.86}}$ & $80.53_{\pm0.81}$ & $80.61_{\pm0.85}$ & $80.61_{\pm0.72}$ 
 & $\bf{80.78_{\pm0.79}}$ & $80.42_{\pm0.98}$ & $80.61_{\pm0.71}$ & $80.16_{\pm0.86}$ & $80.25_{\pm0.60}$ \\
Pubmed 
& $89.54_{\pm0.36}$ & $89.52_{\pm0.46}$ & $88.42_{\pm0.32}$ & $\bf{89.70_{\pm0.34}}$ 
& $89.62_{\pm0.41}$ & $89.58_{\pm0.25}$ & $\bf{89.70_{\pm0.34}}$ & $86.44_{\pm2.05}$ & $87.73_{\pm2.13}$ \\
Computers
& $89.06_{\pm0.24}$ & $89.16_{\pm0.47}$ & $87.09_{\pm0.38}$ & $\bf{89.22_{\pm0.39}}$ 
& $90.39_{\pm0.29}$ & $\bf{90.45_{\pm0.34}}$ & $89.22_{\pm0.42}$ & $87.45_{\pm2.15}$ & $86.85_{\pm2.67}$ \\
Photo 
& $95.33_{\pm0.25}$ & $95.45_{\pm0.27}$ & $94.59_{\pm0.26}$ & $\bf{95.53_{\pm 0.27}}$ 
& $95.43_{\pm0.23}$ & $95.26_{\pm0.31}$ & $\bf{95.53_{\pm0.19}}$ & $94.16_{\pm0.78}$ & $85.65_{\pm8.25}$ \\
Chameleon 
& $65.95_{\pm1.20}$ & $\bf{74.09_{\pm0.85}}$ & $70.24_{\pm1.05}$ & $72.95_{\pm0.83}$ 
& $\bf{74.20_{\pm1.03}}$ & $73.76_{\pm1.03}$ & $72.95_{\pm0.83}$ & $72.63_{\pm0.99}$ & $72.56_{\pm1.01}$ \\
Actor 
& $40.31_{\pm0.82}$ & $40.61_{\pm0.64}$ & $40.42_{\pm0.50}$ & $\bf{40.70_{\pm0.98}}$ 
& $\bf{41.17_{\pm0.64}}$ & $40.01_{\pm0.96}$ & $40.70_{\pm0.98}$ & $37.80_{\pm1.32}$ & $37.56_{\pm0.88}$ \\
Squirrel 
& $37.93_{\pm0.62}$ & $\bf{56.71_{\pm0.89}}$ & $44.48_{\pm0.89}$ & $55.77_{\pm0.55}$ 
& $\bf{57.38_{\pm1.25}}$ & $54.11_{\pm0.82}$ & $55.77_{\pm0.55}$ & $48.66_{\pm6.65}$ & $43.73_{\pm6.94}$ \\
Texas 
& $91.64_{\pm2.46}$ & $88.36_{\pm3.93}$ & $89.34_{\pm2.46}$ & $\bf{92.79_{\pm1.97}}$ 
& $\bf{93.44_{\pm2.13}}$ & $90.82_{\pm2.30}$ & $92.79_{\pm1.97}$ & $89.84_{\pm3.28}$ & $89.34_{\pm3.12}$ \\
Cornell 
& $91.31_{\pm2.13}$ & $88.03_{\pm3.28}$ & $\bf{92.46_{\pm2.63}}$ & $92.30_{\pm2.79}$ 
& $\bf{92.95_{\pm2.46}}$ & $92.62_{\pm2.46}$ & $92.30_{\pm2.62}$ & $89.67_{\pm2.30}$ & $87.54_{\pm3.11}$ \\ \hline
\end{tabular}}
}
\end{sc}
\end{small}
\end{center}
\vskip -0.18in
\end{table*}

JacobiConv outperforms all existing models on $9$ out of $10$ datasets and achieves performance gains up to $12\%$ on a heterogeneous dataset Squirrel. On the Actor dataset, JacobiConv beats all baselines except BernNet. The generally top and runner-up performance of JacobiConv and BernNet verify our analysis in Section~\ref{sec::anal_base}. The results indicate that JacobiConv is a general spectral GNN with consistently good performance across datasets. They also show that nonlinearity is not necessary for learning powerful spectral filters given a good choice of polynomial basis.
\subsection{Ablation Analysis}\label{sec::abl}
% We do an ablation analysis to show the usefulness of our three techniques.

To illustrate the effectiveness of Jacobi basis, we compare JacobiConv with linear GNNs with other filter bases in the left part of Table~\ref{tab::abl}. We also remove PCD from the models to ensure fairness as the coefficient distribution of different bases varies. Jacobi basis outperforms any other basis by more than $0.8\%$ on average. Bernstein basis also outperforms Monomial on average, which is consistent with the results in Table~\ref{tab::real}. 

In the right part of Table~\ref{tab::abl}, UniFilter is JacobiConv using the same filter for all prediction dimensions. No-PCD is JacobiConv without PCD. The results illustrate that the multiple filter functions, PCD, and the Jacobi basis are all essential for JacobiConv. On average, the multiple filter technique provides $1.3\%$ performance gain, and the PCD technique provides $0.8\%$ performance gain.

We design two variants to analyze how removing nonlinearity affects performance: NL and NL-Res. NL replaces the linear transformation in JacobiConv with a $2$-layer ReLU MLP, whose first-layer output has the same dimension as the model output dimension. Compared with NL, NL-Res uses residual connection, which adds the output of the first linear layer to the output of the MLP. NL-Res outperforms NL by $2\%$ on average, while NL leads to $6\%$ performance loss compared with JacobiConv. These results illustrate that linear GNN is expressive enough, and nonlinear transformations can hardly promote the expressive power. The better performance of NL-Res over NL might also be due to its closer relationship to linear GNNs. On the other hand, the lower performance after adding nonlinearity may be attributed to overfitting caused by extra parameters. 

%{\color{red}The comparison between linear GNNs with bases besides Jacobi with spectral GNNs with nonlinearity and the same bases also validates that nonlinearity is unnecessary. Chebyshev increases the performance on all datasets compared to ChebyNet. Out of $10$ datasets, Monomial outperforms GPRGNN on $6$ datasets, and \textsc{Bernstein} is better than BernNet on $4$ datasets. Compared with nonlinear models, linear GNNs with the same bases can achieve comparable or even better performance.}
\begin{table}[t]
\centering
\caption{Parameters/per-epoch time (ms)/total training time (s).}
\vskip -0.1in
\setlength\tabcolsep{1pt}
\label{tab::runtime}
\begin{center}
\begin{small}
\begin{sc}
\resizebox{0.49\textwidth}{!}{
\begin{tabular}{lccccc}
\hline
Datasets        & JacobiConv & APPNP  & BernNet & GPRGNN \\ \hline
cora &10K/6.4/3.1 &~92K/3.6/1.2 &~92K
/11.6/3.1 &~92K/4.3/0.9 \\
citeseer &22K/6.3/3.0 &237K/3.7/1.3 &237K/11.8/3.4 &237K/4.5/1.0 \\
pubmed &~2K/6.6/4.9 &~32K/3.9/2.0 &~32K/11.1/4.9 &~32K/4.5/1.8 \\
computers &~8K/7.3/4.8 &~50K/6.0/2.5 &~50K/29.3/8.6 &~50K/6.5/1.6 \\
photos &~6K/6.4/4.8 &~48K/5.8/2.8 &~48K/15.3/6.2 &~48K/4.5/1.3\\
chameleon &12K/6.5/4.4 &149K/3.9/0.8 &149K/11.0/2.8 &149K/4.4/1.0 \\
actor &~5K/6.5/3.4 &~60K/3.8/0.8 &~60K/10.9/3.5 &~60K/4.3/0.9 \\
squirrel &11K/6.3/6.1 &134K/4.3/0.9 &134K/15.7/4.9 &134K/4.3/2.1 \\
texas &~9K/6.6/3.4 &109K/3.8/0.8 &109K/11.3/2.4 &109K/4.3/1.0 \\
cornell &~9K/6.5/3.4 &109K/3.8/0.8 &109K/11.0/2.4 &109K/4.4/0.9 \\ \hline
\end{tabular}
}
\end{sc}
\end{small}
\end{center}
\vskip -0.6in
\end{table}
\subsection{Scalability}
As shown in Table~\ref{tab::runtime}, compared with other baselines with comparable depth, our model, on average, only uses $10\%$ parameters, as it only uses a linear layer to convert node features to the output shape, while other models use MLPs. JacobiConv also has a similar computational overhead to other baselines, though taking more time than APPNP and GPRGNN due to more complex bases. Theoretically, it still has the same time complexity $O(Kmd)$ as APPNP and GPRGNN, where $K$ is the degree of the polynomial, $m$ is the number of edges in the graph, and $d$ is the number of node feature dimensions, while BernNet's time complexity is $O(K^2md)$.
\section{Conclusion}
In this paper, we analyze the expressive power of spectral GNNs. We prove that even without nonlinearity, spectral GNNs can be universal under mild conditions. We further analyze the optimization of spectral GNNs, which motivates the proposed JacobiConv, a novel spectral GNN using Jacobi basis. JacobiConv outperforms the previous state-of-the-art method BernNet by up to $12\%$ on real-world datasets without using nonlinearity, which verifies our theory. 

\subsection*{Acknowledgements} %copied from GLASS
The authors greatly thank the actionable suggestions from the reviewers. Zhang is partly supported by the CCF-Baidu Open Fund (NO.2021PP15002000).

%We explore these universality conditions' connection to graph isomorphism testing. 

%\section{Reproducibility Statement}
%The complete proofs of our theoretical contributions are in Appendix~\ref{app::proofs} . All datasets used in our experiments are public, and both the processed versions of data and code are provided in the supplementary materials. We will make them public on Github in the future.
\newpage
\bibliography{example_paper}
\bibliographystyle{icml2022}

%%%%%%%%%%%%%%%%%%%%%%%%%%%%%%%%%%%%%%%%%%%%%%%%%%%%%%%%%%%%%%%%%%%%%%%%%%%%%%%
%%%%%%%%%%%%%%%%%%%%%%%%%%%%%%%%%%%%%%%%%%%%%%%%%%%%%%%%%%%%%%%%%%%%%%%%%%%%%%%
% APPENDIX
%%%%%%%%%%%%%%%%%%%%%%%%%%%%%%%%%%%%%%%%%%%%%%%%%%%%%%%%%%%%%%%%%%%%%%%%%%%%%%%
%%%%%%%%%%%%%%%%%%%%%%%%%%%%%%%%%%%%%%%%%%%%%%%%%%%%%%%%%%%%%%%%%%%%%%%%%%%%%%%
\newpage
\appendix
\onecolumn

\section{Existing Models}\label{app::extMod}
\begin{table*}[h]
\centering
\caption{The filter form of spectral GNNs.}\label{tab::filter_form}
\vskip 0.15in
\begin{center}
\begin{small}
\begin{sc}
\begin{tabular}{lcccc}
\hline
Model      & $g$ & Hyperparams & Learnable & PFME \\ \hline
SGC~\textnormal{\citep{SGC}}      &    $(1-\lambda)^K$         &    $\alpha,K$        &                      &            $\times$            \\
APPNP~\textnormal{\citep{APPNP}}      &    $\sum_{k=0}^K \frac{\alpha^k}{1-\alpha} (1-\lambda)^k$         &    $\alpha,K$        &                      &            $\times$            \\
GNN-LF~\textnormal{\citep{GNN-LF}}     &    $\frac{1-(1-\mu)(1-\lambda)}{1-(2-\mu+\frac{1}{\alpha})(1-\lambda)}$         &  $\alpha,\mu$          &                      &              $\times$          \\
GNN-HF~\textnormal{\citep{GNN-LF}}     &    $\frac{1+\beta(1-\lambda)}{1-(1-\beta-\frac{1}{\alpha})(1-\lambda)}$         &  $a,b$          &                      &              $\times$          \\
ChebyNet~\textnormal{\citep{ChebyConv}}   &    $\sum_{k=0}^K \alpha_k \cos(k\arccos(1-\lambda))$         &   $K$     &                  $\alpha_k, K$    &         $\surd$               \\
GPRGNN~\textnormal{\citep{GPRGNN}}     &    $\sum_{k=0}^K \alpha_k (1-\lambda)^k$         &   $K$            &         $\alpha_k$          &                   $\surd$     \\
ARMA~\textnormal{\citep{ARMA}}       &       $\sum_{k=0}^K \frac{b_k}{1-a_k(1-\lambda)}$      &       $K$     &                      $a_k,b_k$&  $\surd$                      \\
BernNet~\textnormal{\citep{BernNet}}    &        $\sum_{k=0}^K \alpha_k \tbinom{K}{k} (1-\frac{\lambda}{2})^{K-k}(\frac{\lambda}{2})^k$     &        $K$    &            $\alpha_k$          &     $\surd$                   \\
JacobiConv (our model) &      $\sum_{k=0}^K \alpha_k \sum_{s=0}^k \frac{(k+a)!(k+b)!(-\lambda)^{k-s}(2-\lambda)^s}{2^ks!(k+a-s)!(b+s)!(k-s)!} $     &     $K, a, b$       &       $\alpha_k$               &      $\surd$                \\ \hline
\end{tabular}
\end{sc}
\end{small}
\end{center}
\vskip -0.1in
\end{table*}
\section{Proofs}\label{app::proofs}

\subsection{Proof of Theorem~\ref{thr::linearexpressive}}\label{app::prooflinearexpressive}

We restate Theorem~\ref{thr::linearexpressive} as follows.
\begin{theorem}
Assuming all rows of $\tilde X$ are not zero vector, and no eigenvalue of $\hat L$ has multiplicity larger than $1$, for all $Z\in \sR^{n\times 1}$, there exists a linear GNN to produce it.
\end{theorem}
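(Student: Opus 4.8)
The plan is to move to the graph Fourier domain and decouple the problem into a choice of the linear transformation $W$ and a polynomial interpolation. Writing $\tilde X = U^T X$ and $\tilde Z = U^T Z$, and using $g(\hat L)XW = U g(\Lambda)\tilde X W$, producing a target $Z$ is equivalent to finding a polynomial $g$ and a column vector $W\in\sR^{d\times 1}$ such that
\begin{equation}
g(\lambda_i)\,(\tilde X_i W) = \tilde Z_i \qquad \text{for all } i\in\{1,\dots,n\},
\end{equation}
where $\lambda_i$ is the $i$-th eigenvalue of $\hat L$ and $\tilde X_i$ the $i$-th row of $\tilde X$. So the argument splits into: (i) pick $W$ so that every ``amplitude'' $c_i := \tilde X_i W$ is nonzero, and (ii) interpolate $g$ through the values $\tilde Z_i/c_i$ at the distinct nodes $\lambda_i$.

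For step (i) I would invoke the no-missing-frequency hypothesis: each $\tilde X_i$ is a nonzero vector in $\sR^d$, hence $\{W : \tilde X_i W = 0\}$ is a proper linear subspace of $\sR^d$. A finite union of $n$ proper subspaces of a vector space over the infinite field $\sR$ cannot cover $\sR^d$, so there exists a single $W$ with $c_i = \tilde X_i W \neq 0$ for all $i$ simultaneously; fix such a $W$ and merge it into the features.

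For step (ii), set $y_i := \tilde Z_i/c_i$. Since $\hat L$ has no multiple eigenvalues, $\lambda_1,\dots,\lambda_n$ are pairwise distinct, so Lagrange interpolation yields a polynomial $g$ of degree at most $n-1$ with $g(\lambda_i)=y_i$ for every $i$; this is realizable because the filter is assumed to have high-enough degree $K\ge n-1$. Then $g(\lambda_i)c_i = \tilde Z_i$ for all $i$, i.e. $g(\Lambda)\tilde X W = \tilde Z$, and left-multiplying by $U$ gives $g(\hat L)XW = Z$.

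The two hypotheses enter in a complementary and essential way — a repeated eigenvalue would pin a ratio among the $\tilde Z_i$ once $W$ is fixed, and a missing component $\tilde X_i = 0$ would force $\tilde Z_i = 0$ regardless of $g$ and $W$ — so the real content is recognizing this split. The only nontrivial technical step is the union-of-hyperplanes argument for choosing $W$, which is precisely where the linear transformation (and not just the filter) does real work; the rest is a direct computation in the eigenbasis together with polynomial interpolation.
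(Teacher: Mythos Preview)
Your proof is correct and follows essentially the same approach as the paper: choose $W$ avoiding the finite union of hyperplanes $\{\tilde X_i W=0\}$, then interpolate $g$ at the distinct eigenvalues to hit the targets $\tilde Z_i/(\tilde X_i W)$. The paper phrases the interpolation step via the Vandermonde system rather than Lagrange interpolation, but the argument is the same.
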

{\it Proof.} First, we prove that $W^*\in \sR^{d}$ exists so that all elements of $\tilde XW^*$ are not zero. 

Consider the $i^{\text{th}}$ row of $\tilde XW$ equals $0$. In other words, $\tilde X_{i}W=0$. Let the solution space of $W$ be $V_i$. As $\tilde X_i\neq 0$, $V_i$ is a proper subspace of $\sR^{d}$. Therefore, $\sR^{d}-\bigcup_{i=1}^n V_{i}\neq \emptyset$. All vectors $W$ in $\sR^{d}-\bigcup_{i=1}^n V_{i}\neq \emptyset$ can meet the requirements, 

Then we filter $\tilde XW^*$ to produce the output. For all one-dimension prediction $Z\in \sR^{n}$, $\tilde Z=U^TZ\in \sR^{n}$. If there exists a polynomial that $g^*(\lambda_i)=R_i$, where $R$ is a vector whose $i^{\text{th}}$ row $R_i=\frac{\tilde Z_i}{(\tilde XW)_i}$, for $i\in \{1, 2, ..., n\}$ , linear GNNs can produce $Z$.

As $\lambda_i$ are different from each other, consider an $n-1$ degree polynomial, $g(\lambda_i)=\sum_{k=0}^{n-1}\theta_k\lambda_i^k$. The coefficient $\theta_k$ of $g^*$ is the solution of the linear system $B\Theta=R$, where $B\in\sR^{n\times n}$ and $B_{ij}=\lambda_{i}^{j-1}$, $\Theta \in \sR^n$ and $\Theta_k=\theta_{k-1}$, $R\in \sR^{n}$, gives the coeffcient of $g$. As $B^T$ is a Vandermonde matrix and becomes nonsingular if eigenvalues are different from each other, a solution always exists. Therefore, linear GNNs can give arbitrary one-dimensional prediction.

\rightline{\qedsymbol}

\subsection{Proof of Proposition~\ref{prop::multidim}}\label{app::multidim}
Assuming an output $Z\in\sR^{n\times k}, k>1$, that linear GNNs can express it is equivalent to that the equation $Z=g(\hat L)XW$ has solution polynomial $g$ and matrix $W$. The equation is equivalent to $\tilde Z=g(\Lambda)\tilde X W$. Let $\tilde X_{s_i},i=1,2,...,\text{rank}(\tilde X)$ be a maximal linearly independent subset of the set of row vectors in $\tilde X$. We prove that linear GNNs cannot produce the prediction described in the following lemma.

\begin{lemma}
Assuming that all the elements of the $s_i^{\text{th}}$ row of $\tilde Z$ are the same scalar $\tilde Z_{s_i}\in \sR-\{0\}$, $i=1,2,...,n$ and there exists $\tilde Z_{ij_1}\neq \tilde Z_{ij_2}$, where $i\in\{1,2,...,n\}-\{s_i|i=1,2,...,\text{rank}(X)\}, j_1, j_2\in \{1,2,...,k\}, j_1\neq j_2$, no linear GNN can produce $U\tilde Z$.
\end{lemma}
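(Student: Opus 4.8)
The plan is to pass to the graph Fourier domain, where a linear GNN acts \emph{diagonally} along the frequency axis, and to exploit the fact that such a diagonal action can never create genuine variation \emph{across} the $k$ output channels that is not already present as a rank-one ``all channels equal'' pattern in $\tilde X W$. Concretely, I will argue by contradiction: suppose some polynomial $g$ and matrix $W$ realize the target $U\tilde Z$ of the lemma. Since $Z=g(\hat L)XW$ is equivalent to $\tilde Z=g(\Lambda)\tilde X W$ and $g(\Lambda)$ is diagonal, this reads, row by row, $\tilde Z_i = g(\lambda_i)\,\tilde X_i W$ for every $i\in\{1,\dots,n\}$.

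The first real step is to pin down the rows indexed by the maximal linearly independent set $\{\tilde X_{s_j}\}$ of rows of $\tilde X$. For each such $s_j$, the hypothesis says $\tilde Z_{s_j}$ is the constant row vector whose every entry equals the nonzero scalar $\tilde Z_{s_j}$, i.e.\ $\tilde Z_{s_j}\mathbf{1}^T$. From $\tilde Z_{s_j}\mathbf{1}^T = g(\lambda_{s_j})\,\tilde X_{s_j}W$ and the fact that the left side is nonzero, I get $g(\lambda_{s_j})\neq 0$, hence $\tilde X_{s_j}W = \bigl(\tilde Z_{s_j}/g(\lambda_{s_j})\bigr)\mathbf{1}^T$ is a scalar multiple of the all-ones row vector. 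This is the crux: every ``basis row'' of $\tilde X W$ is forced to be proportional to $\mathbf{1}^T$.

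Then I propagate this to the distinguished index $i\notin\{s_j\}$. Because $\{\tilde X_{s_j}\}$ is a maximal independent subset of the rows of $\tilde X$, write $\tilde X_i=\sum_j c_{ij}\tilde X_{s_j}$; multiplying on the right by $W$ and substituting the previous step, $\tilde X_iW$ is a linear combination of multiples of $\mathbf{1}^T$, hence itself a multiple of $\mathbf{1}^T$, and therefore so is $\tilde Z_i=g(\lambda_i)\tilde X_iW$. But then all entries of $\tilde Z_i$ coincide, contradicting $\tilde Z_{ij_1}\neq\tilde Z_{ij_2}$. This closes the argument; I would also note that $U$ orthogonal gives $\mathrm{rank}(\tilde X)=\mathrm{rank}(X)$, so the indices $s_j$ (and a leftover index $i$) exist precisely when $X$ is not full-row-rank, which is what links the lemma to Proposition~\ref{prop::multidim}.

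I expect the only subtle point to be the middle step — observing that a \emph{nonzero} constant output row forces $g$ not to vanish at that eigenvalue, which is exactly what licenses dividing by $g(\lambda_{s_j})$ and concluding the all-ones structure of $\tilde X_{s_j}W$. Everything after that is routine bookkeeping with the linear dependence among rows of $\tilde X$, and nothing in the argument uses distinctness of the eigenvalues, so it applies to every graph as claimed.
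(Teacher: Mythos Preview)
Your proposal is correct and follows essentially the same approach as the paper's proof: pass to the spectral domain, use the nonzero constant rows $\tilde Z_{s_j}$ to force $g(\lambda_{s_j})\neq 0$ and hence $\tilde X_{s_j}W\propto\mathbf{1}^T$, then propagate by the linear dependence of the rows of $\tilde X$ to conclude that every row of $\tilde Z$ is constant across columns, contradicting the hypothesis on row $i$. The paper packages the last step in matrix form as $\tilde Z=(g(\Lambda)\,M\,g(\Lambda)_{\sI\sI}^{-1})\tilde Z_{\sI}$ where $\tilde X=M\tilde X_{\sI}$, whereas you write out the coefficients $c_{ij}$ row by row; the content is identical.
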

{\it Proof.} $\tilde X=U^TX$, where $U$ is an orthogonal matrix. Therefore, $\text{rank}(\tilde X)=rank(X)<n$.

Let $\sI$ denote the set $\{s_i|i=1,2,...,\text{rank}(X)\}$. As $\tilde X_{sI}$ forms a maximal linearly independent row vectors of $\tilde X$. Therefore, there exists $M\in \sR^{n\times\text{rank(X)}}, \tilde X= M\tilde X_{\sI}$. Only consider the rows in $\sI$ of the equation.
\begin{equation}
\begin{aligned}
\tilde Z_{\sI}=g(\Lambda)_{\sI\sI}\tilde{X}_{\sI}W.\\
\end{aligned}
\end{equation}
As all elements in $\tilde{Z_{\sI}}\neq 0$, all diagonal elements of $g(\Lambda)_{\sI\sI}\neq 0$. Therefore,
\begin{equation}
\begin{aligned}
g(\Lambda)_{\sI\sI}^{-1}\tilde{Z}_{\sI}=\tilde{X}_{\sI}W.
\end{aligned}
\end{equation}
Therefore, all column vectors of $\tilde Z$ should be equal, because
\begin{equation}
\begin{aligned}
\tilde{Z}=g(\Lambda)\tilde{X}W=g(\Lambda)M\tilde{X}_{\sI}W
=(g(\Lambda)M g(\Lambda)_{\sI}^{-1})\tilde{Z}_{\sI}.
\end{aligned}
\end{equation}
As all column vectors of $\tilde Z_{\sI}$ are equal, column vectors of $\tilde Z$ are all the same, while we assume that there exists $i\in\{1,2,...,n\}-\{s_i|i=1,2,...,\text{rank}(X)\}, j_1, j_2\in \{1,2,...,n\}, j_1\neq j_2$ that $\tilde Z_{ij_1}\neq \tilde Z_{ij_2}$. Therefore, such linear GNNs do not exist.

\rightline{\qedsymbol}

\subsection{Proof of Proposition~\ref{prop::linear_vs_WL} and Corollary~\ref{coro::1WLPower}}\label{app::prof::prop::linear_vs_WL}
{\it Proof.}
When the filter function is a $K$-degree polynomial, the prediction of the linear GNN can be formulated as follows.
\begin{equation}
\begin{aligned}
Z=\sum_{k=0}^K\theta_k \hat A^k(XW).
\end{aligned}
\end{equation}
Using the framework in~\citep{HowPowerfulAreGNNs}, it can be considered as a $K+1$-layer GNN. Let $h^{(k)}_i$ denote the embeddings of node $i$ at the $k^{\text{th}}$ layer. $\text{COMBINE}^{(k)}$, $\text{AGGREGATE}^{(k)}$ are functions defined as follows. 
\begin{equation}
\begin{aligned}
&a^{(1)}_i=\text{AGGREGATE}^{(1)}(\{h^{(k-1)}_j|j\in N(i)\})=|\{h^{(k-1)}_j|j\in N(i)\}|=D_{ii}\\
&\text{COMBINE}^{(1)}(a^{(1)}_i, X_{i})=(D_{ii}, \theta_K X_{i}, X_{i}),\\
\end{aligned}
\end{equation}
where $\text{COMBINE}^{(1)}$ produce a tuple containing three items. 
For $k=2,..., K$,
\begin{equation}
\begin{aligned}
&a^{(k)}_i=\text{AGGREGATE}^{(k)}(\{(D_{jj},h^{(k-1)}_j, X_i)|j\in N(i)\})=\sum_{j\in N(i)} \frac{1}{\sqrt{D_{jj}}}h^{(k-1)}_j\\
&\text{COMBINE}^{(k)}(a^{(k)}, (D_{ii},h^{(k-1)}_j,X_i))=(D_{ii}, \frac{1}{\sqrt{D_{ii}}}a^{(k)}_i+\theta_{K+1-k}X_{i}, X_{i}).\\
\end{aligned}
\end{equation}
For $k=K+1$,
\begin{equation}
\begin{aligned}
&a^{(k)}_i=\text{AGGREGATE}^{(k)}(\{(D_{jj},h^{(k-1)}_j, X_i)|j\in N(i)\})=\sum_{j\in N(i)} \frac{1}{\sqrt{D_{jj}}}h^{(k-1)}_j\\
&\text{COMBINE}^{(k)}(a^{(k)}, (D_{ii},h^{(k-1)}_j,X_i))= \frac{1}{\sqrt{D_{ii}}}a^{(k)}_i+\theta_{0}X_{i}.\\
\end{aligned}
\end{equation}
Therefore, the output of the last layer in GNN produce the output of linear GNNs. According to the proof of Lemma 2 in \citet{HowPowerfulAreGNNs}, if WL node labels $WL_k(v)=WL_k(u)$, we always have GNN node features $h^{(k)}_i=h^{(k)}_j$ for any iteration $i$. Therefore, for all nodes $i, j\in \sV$, $LG_K(i)=LG_K(j)$ if $WL_{K+1}(i)=WL_{K+1}(j)$.

The proof of Corollary~\ref{coro::1WLPower} is obvious. For any pair of non-isomorphic nodes in the graph, linear GNNs can produce different outputs for the two nodes, so $1$-WL can also differentiate them.

\subsection{Proof of Theorem~\ref{thr::singleeigen->permutation}}\label{app::singleeigen->graph}
Assuming $\pi$ is a permutation function and $P$ is a permutation matrix, $\delta_{\pi(a),a}$, the graph is isomorphic under the permutation $\pi$.
\begin{equation}
\begin{aligned}
\hat{L}&=P^T\hat LP\\
U\Lambda U^T&=PU\Lambda U^TP^T\\
\Lambda&=U^TPU\Lambda U^TP^TU\\
\Lambda&=V\Lambda V^T,\\
\end{aligned}
\end{equation}
where $V$ is an orthogonal matrix. As all diagonal elements of $\Lambda$ are different, the eigenspace corresponding to each eigenvalue has only one dimension. Therefore, 
\begin{equation}
\begin{aligned}
U^TPU=V=D',
\end{aligned}
\end{equation}
where $D'$ is a diagonal matrix whose diagonal elements are $\pm 1$. Therefore, \begin{equation}
\begin{aligned}
P=UD'U^T.
\end{aligned}
\end{equation}
Therefore, $P$ is symmetric, in other words, for $i\in\{1,2,...,n\}$, $\pi(\pi(i))=i$. Therefore, for any graph without multiple normalized Laplacian eigenvalue, the order of permuatations is $1$ or $2$.

\subsection{Proof of Theorem~\ref{thr::missingfreq}}

For all permutation $\pi$ and its matrix $P$ for graph.
\begin{equation}
\begin{aligned}
\hat A&=P\hat AP^T\\
X&=PX
\end{aligned}
\end{equation}

Let $V$ denote $U^TPU$.

\begin{equation}
\begin{aligned}
\Lambda&=V\Lambda V^T\\
\tilde X&=V\tilde X
\end{aligned}
\end{equation}

If $\hat A$ does not have multiple eigenvalues, $V=D$, $D$ is a diagonal matrix whose diagonal elements are $\pm 1$. So $(I-D)\tilde X=0$.

Assuming all rows of$\tilde X$ are not zero vector (no missing frequency component), $I-D=0$, $D=I$. 

Therefore, $P=UDU^T=I$. Therefore, all pairs of nodes in this graph are not isomorphic when considering node features.

\subsection{Proof of Proposition~\ref{prop::power_non-ortho}}

Orthogonality require $\langle x,x\rangle\neq 0$ while $\langle 1, x^2\rangle =0$. However,
\begin{equation}
\begin{aligned}
\langle 1, x^2\rangle = \int_{0}^2 x^2f(x)\dd x=\langle x, x\rangle.
\end{aligned}
\end{equation}

\subsection{Proof of Proposition~\ref{prop::randomSignal}}\label{app::randomSignal}
As $\tilde X= U^TX$, the distribution density $f_1$ of $\tilde X$ has a simple relation with the distribution density function $f_2$ of $X$,
\begin{equation}
\begin{aligned}
f_1(\tilde X)&= f_2(U\tilde X)|\det(U^T)|\\
&=\frac{1}{\det({2\pi\sigma^2 I})^{1/2}}e^{-\frac{1}{2}\tilde X^TU^T(\sigma^2 I)^{-1}U\tilde X}\\
&=\frac{1}{\det({2\pi\sigma^2 I})^{1/2}}e^{-\frac{1}{2}\tilde X^T(\sigma^2 I)^{-1}\tilde X}
\end{aligned}
\end{equation}
Therefore, $\tilde X\sim N_n(0,\sigma^2 I)$.

We can extend this proposition to the multi-dimensional cases. Consider $X\in\sR^{n\times d}$, $\text{vec}(X)\in N_{nd}(0,\sigma^2 I)$, $\tilde X = U^TX$, $\text{vec} (\tilde X)=I\bigotimes U^T \text{vec}(X)$. $I\bigotimes U^T$ is still a orthogonal matrix. Therefore, $\text{vec}(\tilde X)\in N_{nd}(0,\sigma^2 I)$

\subsection{Proof of Theorem~\ref{thr::random_expressive}}\label{app::random_expressive}
We use a lemma from ~\citep{randommat_rank}.
\begin{lemma}
Let $F(x_1,...,x_m)$ be a non-zero polynomial of variables $x_1,...,x_m$ with real coefficients, then, $\mu_mD = 0$, where $D= \{x|F(x)=0,x=(x_1,...,x_m)^T\in \sR^m\}$ and $\mu_mD$ is the Lebesgue measure of $D$ as the set of points in $\sR^m$.
\end{lemma}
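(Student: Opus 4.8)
The plan is to prove this by induction on the number of variables $m$, using the single fact that a non-zero univariate polynomial has only finitely many real roots, and then lifting that fact to higher dimensions with the Tonelli theorem. The base case $m=1$ is immediate: a non-zero polynomial $F(x_1)$ of degree $d$ has at most $d$ real roots, so $D$ is finite and hence $\mu_1 D = 0$.

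For the inductive step I would assume the statement in dimension $m-1$ and write $F$ as a polynomial in the last variable with coefficients that are polynomials in the first $m-1$ variables: $F(x_1,\dots,x_m) = \sum_{j=0}^{d} a_j(x_1,\dots,x_{m-1})\, x_m^{\,j}$. Since $F$ is not identically zero, at least one coefficient $a_{j_0}$ is a non-zero polynomial; set $D' = \{y \in \sR^{m-1} : a_{j_0}(y) = 0\}$, so that $\mu_{m-1} D' = 0$ by the induction hypothesis. Then I would decompose $D \subseteq (D' \times \sR) \cup \{(y,x_m) : y \notin D',\ F(y,x_m) = 0\}$. The first piece is null in $\sR^m$: writing $\sR = \bigcup_n [-n,n]$ gives $\mu_m(D' \times [-n,n]) = 2n\,\mu_{m-1} D' = 0$, and countable subadditivity finishes it. For the second piece, note $D$ is closed (hence measurable) as the preimage of $\{0\}$ under the continuous map $F$, so Tonelli applies: for each fixed $y \notin D'$ the slice $\{x_m : F(y,x_m) = 0\}$ is the zero set of the univariate polynomial $x_m \mapsto F(y,x_m)$, which is non-zero because its $x_m^{\,j_0}$-coefficient is $a_{j_0}(y) \neq 0$; hence the slice is finite and has $\mu_1$-measure zero, and integrating over $y$ shows this piece has $\mu_m$-measure zero. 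Combining the two pieces yields $\mu_m D = 0$.

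The step that needs the most care is the inductive reduction itself: one must extract a coefficient $a_{j_0}$ that is a genuinely non-zero polynomial so that the restriction $F(y,\cdot)$ remains non-zero for all $y$ outside the null set $D'$, and one must segregate the ``bad'' set $D' \times \sR$ — where $F(y,\cdot)$ could vanish identically — and control it solely through $\mu_{m-1} D' = 0$ rather than through the slice argument. Everything else (measurability of $D$, finiteness of univariate root sets, the Tonelli bookkeeping) is routine.
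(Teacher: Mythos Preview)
Your argument is correct and is exactly the standard induction-on-dimension proof of this classical fact. Note, however, that the paper does not supply its own proof of this lemma at all: it simply quotes the statement and cites an external reference (\texttt{randommat\_rank}) for it, using the result as a black box in the proof of Theorem~\ref{thr::random_expressive}. So there is no ``paper's proof'' to compare against; your write-up fills in what the paper outsources, and the induction with the Tonelli slice argument you give is the canonical way to do it.
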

As we use individual filter parameters for each output dimension, if we can produce arbitrary one-dimensional prediction, muli-dimensional prediction can also be produced. So we can assume $Z\in\sR^{n\times 1}$. Consider the linear GNNs in the frequency domain.
\begin{equation}
\begin{aligned}
\tilde Z = g(\Lambda)\tilde X W.
\end{aligned}
\end{equation}

Assuming that multiple eigenvalues are in the $i_1, i_2,...$ rows of $\Lambda$. Let $\sI$ be $i_1, i_2,...$, $|\sI|=\sum s_i$. As no frequency components are missing from $Z$, all diagonal elements in $g(\Lambda)$ are not zero. 

We first build $g(\Lambda)_{\sI\sI}$ and $W_{\sI}$ to produce $Z_{\sI}$.
\begin{equation}
\begin{aligned}
\tilde Z_{\sI} = g(\Lambda)_{\sI\sI}\tilde X_{\sI} W.
\end{aligned}
\end{equation}
As all elements in $\tilde X$ independently follows $N(0,\sigma^2)$, the probability that $\tilde X_{\sI}$ becomes singular is,
\begin{equation}
\begin{aligned}
\int_{|\tilde X_{\sI}|=0} \frac{1}{(2\pi\sigma^2)^{d^2/2}} e^{-\frac{1}{2\sigma^2}||X_{\sI}||_F^2} \dd \tilde X_{\sI}
\le \int_{|\tilde X_{\sI}|=0} \frac{1}{(2\pi\sigma^2)^{d^2/2}} \dd \tilde X_{\sI}=0.
\end{aligned}
\end{equation}
Therefore, $W=(\tilde X_{\sI})^{-1}g(\Lambda)_{\sI\sI}^{-1}\tilde Z_{\sI}\neq 0$. With probablity $1$, $Z_{\sI}$ can be produced.

Then we consider how to produce other rows. Let $\sJ=\{1,2,..., n\}-\sI$.
\begin{equation}
\begin{aligned}
\tilde Z_{\sJ}=g(\Lambda_{\sJ\sJ}) \tilde X_{\sJ}W
\end{aligned}
\end{equation}
The probability of some rows of $\tilde X_{\sJ}W$ are $0$ is,
\begin{equation}
\begin{aligned}
\int_{\min_{i\in \sJ} |\tilde X_{i'}W|=0} \frac{1}{(2\pi\sigma^2)^{d(n-d)/2}} e^{-\frac{1}{2\sigma^2}||X_{\sJ}||_F^2} \dd \tilde X_{\sJ}
&\le \sum_{i'=1}^{n-1}\int_{|\tilde X_{i'}W|=0} \frac{1}{(2\pi\sigma^2)^{d(n-d)/2}} e^{-\frac{1}{2\sigma^2}||X_{\sJ}||_F^2} \dd \tilde X_{\sJ}\\
&\le \sum_{i'=1}^{n-1}\int_{|\tilde X_{i'}W|=0} \frac{1}{(2\pi\sigma^2)^{(n-d)d/2}} \dd \tilde X_{\sJ}=0.
\end{aligned}
\end{equation}

Assume that all rows of $\tilde X_{\sJ}W$ are not zero. As all elements of $\Lambda_{\sJ\sJ}$ are different, we can let $g(\Lambda)_{i_j}=\tilde Z_{i_j}/(\tilde X_{i_j}W)$. Therefore, other rows of $\tilde Z$ can also be built with probablity $1$. The probability that $Z$ can be produced is $1$.

\subsection{Proof of Proposition~\ref{prop::randomPolyDegree}}\label{app::randomPolyDegree}

The number of different eigenvalues is $O(n)$. Let $\sI=\{i_1,i_2,...\}$ be the set of the index of different eigenvalues, and $\lambda_{i_1}<\lambda_{i_2}<...$.

Consider the signal $\tilde x$ in the frequency domain. $\tilde x\sim N(0,\sigma^2 I)$. For any pair of adjacent elements in $\tilde x_{\sI}$, $\tilde x_{i_j}$ and $\tilde x_{i_{j+1}}$, the probability that two nodes have different signs is $\frac{1}{2}$. There, $O(n)$ pairs of $\tilde x_{i_j}$ and $\tilde x_{i_{j+1}}$ have different signs.

For these pairs, after filtering, $\tilde z_{i_j}=g(\lambda_{i_j})\tilde x_{i_j}$, $\tilde z_{i_{j+1}}=g(\lambda_{i_{j+1}})\tilde x_{i_{j+1}}$. There are three cases.

\begin{itemize}
    \item $\tilde z_{i_j}=0$ or $\tilde z_{i_{j+1}}=0$. A zero-point exist for $g$.
    \item $\tilde z_{i_j}$ and $\tilde z_{i_{j+1}}$ have the same signs. A zero-point exist for $g$ in $(\lambda_{i_j},\lambda_{i_{j+1}})$.
    \item $\tilde z_{i_j}$ and $\tilde z_{i_{j+1}}$ have different signs. 
\end{itemize}

Therefore, the number of zero points of $g$ is the number of case $1$ add that of case $2$ minus the count of case $3$, $O(n)-O(1)=O(n)$.

Therefore, the degree of polynomial is $O(n)$ in expectation.

\section{Polynomial Filter with Limited Degree}\label{app::limitedDegree}

Approximating functions with polynomials is well studied in numerical analysis. Weierstrass Approximation Theorem ensures the asymptotic approximation. For fixed-order polynomials, Theorem 3.3 of \citet{Interpolation} shows that, when approximating a filter function $h\in C^{n+1}[0,2]$ with an $n$-order polynomial $g(x)$, an upper bound for the error exists.
\begin{equation}
sup_{x\in[0, 2]}|h(x)-g(x)| \le \frac{1}{(n+1)!}(sup_{x\in [0, 2]}|h^{(n+1)}(x)|)(sup_{x\in [0, 2]}|\prod_{i=0}^n(x-x_i)|),
\end{equation}
where $x_0, x_1,..., x_n$ are distinct numbers selected in $[0, 2]$. Let $x_i$ be Chebyshev points $1+\cos(\frac{2i+1}{2n+2}\pi)$. 

\begin{align}
sup_{x\in[0, 2]}|h(x)-g(x)| &\le \frac{1}{(n+1)!}(sup_{x\in [0, 2]}|h^{(n+1)}(x)|)(sup_{x\in [0, 2]}|\frac{1}{2^{n}}\cos((n+1)\arccos(x-1))|)\\
&\le \frac{1}{(n+1)!2^n}sup_{x\in [0, 2]}|h^{(n+1)}(x)|.
\end{align}

Therefore, the approximation error of polynomial depends on both the polynomial degree and the property of filter function. In linear GNNs, as each output dimension learns a different filter, we consider only one output dimension. The squared loss is bounded as follows.
\begin{align}
    \frac{1}{2}||Y-Z||_F^2&=\frac{1}{2}(Z-Y)^T(Z-Y)\\
    &= \frac{1}{2}(U\tilde Y-U\tilde Z)^T(U\tilde Y-U\tilde Z)\\
    &=\frac{1}{2}||\tilde Y-\tilde Z||_F^2\\
    &=\frac{1}{2}||(h(\Lambda)-g(\Lambda))\tilde XW||_F^2\\
    &\le \frac{1}{2}(\sup_{\lambda\in [0,2]}|h(\lambda)-g(\lambda)|)^2 ||\tilde XW||_F^2\\
    &\le\frac{1}{2}  (\frac{1}{(n+1)!2^{n}})^2  ||XW||_F^2 sup_{x\in [0, 2]}|h^{(n+1)}(x)|^2.
\end{align}

\section{Random Feature. Why? Why not?}\label{app::randDiscussion}
Next, we study ways to break the no-missing-frequency condition in Theorem~\ref{thr::linearexpressive} to increase linear GNNs' expressive power.

Existing literature has tried to utilize random features for GNNs. GNN-RNI~~\citep{GNN-RNI} randomly initializes node embeddings and can approximate any functions mapping graphs to real numbers. \citet{GNN-RNI} prove that GNN with random features can universally approximate any permutation invariant function $f: \gG_n\to \sR$, which mainly describes the representation of the whole graph. \citet{rGIN} prove that GNN with random features can distinguish any local structure. Both works analyze from a graph isomorphism perspective. However, from a spectral perspective, we prove the expressive power of random features for node property tasks and analyze why it fails on node classification tasks.

First, we prove that no frequency component is missing from the random feature.
\begin{proposition}\label{prop::randomSignal}
Assume vector $x\sim N_n(0, \sigma^2 I)$, where $N_n$ is the Gaussian distribution of $n$ variables. The graph Fourier transformation of $x$ is $\tilde x\sim N_n(0,\sigma^2 I)$.
\end{proposition}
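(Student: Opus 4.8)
The plan is to use the single structural fact that the graph Fourier transform $\tilde x = U^T x$ is an \emph{orthogonal} linear map. This is where symmetry of the Laplacian enters: since $\hat L = U\Lambda U^T$ with $\hat L$ real symmetric, the eigenvector matrix $U$ can be taken orthonormal, so $U^T U = UU^T = I$ and $|\det U| = 1$. Given that, the statement is a standard invariance property of the isotropic Gaussian, and I would present it in whichever of the two equivalent ways reads most cleanly.

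The first route is the characterization of Gaussians under affine maps: any vector $Ax + b$ with $x$ Gaussian is Gaussian, with mean $A\,\mathbb{E}[x] + b$ and covariance $A\,\Cov(x)\,A^T$. Taking $A = U^T$, $b = 0$, $\mathbb{E}[x] = 0$ and $\Cov(x) = \sigma^2 I$ immediately yields $\mathbb{E}[\tilde x] = 0$ and $\Cov(\tilde x) = U^T(\sigma^2 I)U = \sigma^2 U^T U = \sigma^2 I$, hence $\tilde x \sim N_n(0, \sigma^2 I)$. The second route is the density change-of-variables argument sketched in the excerpt: since $x = U\tilde x$, the density $f_1$ of $\tilde x$ satisfies $f_1(\tilde x) = f_2(U\tilde x)\,|\det U^T| = f_2(U\tilde x)$, and substituting the isotropic density $f_2(x) = (2\pi\sigma^2)^{-n/2}\exp(-\tfrac{1}{2\sigma^2}x^T x)$ together with $\|U\tilde x\|^2 = \tilde x^T U^T U\tilde x = \tilde x^T\tilde x$ gives exactly the $N_n(0, \sigma^2 I)$ density. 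Both arguments are a few lines.

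To handle the matrix-valued feature case $X \in \sR^{n\times d}$ with $\mathrm{vec}(X) \sim N_{nd}(0, \sigma^2 I)$, I would observe that $\mathrm{vec}(\tilde X) = \mathrm{vec}(U^T X) = (I_d \otimes U^T)\,\mathrm{vec}(X)$ and that $I_d \otimes U^T$ is again orthogonal, since $(I_d \otimes U^T)^T(I_d \otimes U^T) = I_d \otimes (U^T U) = I_{nd}$; the one-dimensional argument then applies verbatim. I do not expect any genuine obstacle in this proof — it is essentially routine — so the only points requiring care are stating explicitly \emph{why} $U$ may be taken orthogonal (real symmetry of $\hat L$), noting that the Jacobian factor in the change of variables equals $1$, and confirming that the Kronecker structure preserves orthogonality in the vectorized multidimensional extension.
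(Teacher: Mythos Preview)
Your proposal is correct and essentially identical to the paper's own proof: the paper uses precisely your second route (the density change-of-variables computation $f_1(\tilde x) = f_2(U\tilde x)\,|\det U^T|$ together with $U^TU = I$), and then extends to the matrix case via the same Kronecker observation $\mathrm{vec}(\tilde X) = (I \otimes U^T)\,\mathrm{vec}(X)$ with $I \otimes U^T$ orthogonal. Your first route (affine-map characterization of Gaussians) is a clean equivalent alternative not spelled out in the paper, but the content is the same.
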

The proposition is proved in Appendix~\ref{app::randomSignal}. 

We call $x$ in Proposition~\ref{prop::randomSignal} random features. Therefore, the probability of some frequency components missing from the random features is $0$. If we concatenate random features to the node features, no frequency components will be missing from the node features. 
Moreover, random features can also help with the multiple eigenvalue problem. 
\begin{theorem}\label{thr::random_expressive}
Assuming that the number of multiple eigenvalues is $m$, and among them, the $i-\text{th}$ multiple eigenvalue has multiplicity $s_i$. With $(\sum_{i=1}^m s_i)$-dimensional $\sim N_n(0, \sigma^2 I)$ random node features, for all prediction with no missing frequency components, linear GNNs can produce it with probability $1$.
\end{theorem}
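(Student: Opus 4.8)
The plan is to reduce to a one-dimensional target, pass to the graph Fourier domain, split the eigenvalue indices into those carrying a multiple eigenvalue and those carrying a simple one, and then handle the two blocks by different means: a linear-algebra inversion for the multiple block and a Vandermonde interpolation for the simple block, with the random features supplying (almost surely) the nondegeneracy needed at each stage. Concretely, since each output channel gets its own polynomial coefficients, it suffices to produce an arbitrary $Z\in\sR^{n\times 1}$ with no missing frequency component. Producing $Z$ is equivalent to solving $\tilde Z = g(\Lambda)\,\tilde X W$ for a polynomial $g$ and a vector $W$, where, by Proposition~\ref{prop::randomSignal}, the random-feature matrix satisfies $\tilde X = U^TX$ with i.i.d.\ $N(0,\sigma^2)$ entries and has exactly $S:=\sum_{i=1}^m s_i$ columns. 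Let $\sI$ be the set of the $S$ indices whose eigenvalues are multiple, and $\sJ$ its complement, so that the eigenvalues $\{\lambda_i: i\in\sJ\}$ together with the $m$ distinct values attained on $\sI$ are pairwise distinct.

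For the block $\sI$, I would first commit the values of $g$ on the multiple eigenvalues — say $g=1$ there, so $g(\Lambda)_{\sI\sI}=I$ — and then read $W$ off from $\tilde Z_{\sI}=\tilde X_{\sI}W$. The matrix $\tilde X_{\sI}$ is $S\times S$ with i.i.d.\ Gaussian entries, so $\det \tilde X_{\sI}$ is a nonzero polynomial in those entries; by the lemma of \citet{randommat_rank} (the zero set of a nonzero polynomial has Lebesgue measure zero) $\tilde X_{\sI}$ is invertible with probability $1$, and on that event I set $W=\tilde X_{\sI}^{-1}\tilde Z_{\sI}$. Since $Z$ has no missing frequency component, $\tilde Z_{\sI}$ is not the zero vector, hence $W\neq 0$.

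For the block $\sJ$, with $W$ now fixed and nonzero, each entry of $\tilde X_{\sJ}W$ is a Gaussian with variance $\sigma^2\lVert W\rVert^2>0$ (the entries of $\tilde X_{\sJ}$ being independent of $\tilde X_{\sI}$, hence of $W$), so the event that some entry of $\tilde X_{\sJ}W$ vanishes again has measure zero; off it every such entry is nonzero. Because the eigenvalues indexed by $\sJ$ and the $m$ values already prescribed on $\sI$ are all distinct, the Vandermonde/interpolation argument of Theorem~\ref{thr::linearexpressive} produces a polynomial $g$ that equals $1$ on the multiple eigenvalues and equals $\tilde Z_i/(\tilde X_iW)$ for each $i\in\sJ$; this $g$ together with $W$ reproduces $\tilde Z$, hence $Z$. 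The only probabilistic ingredients were the two measure-zero events (singularity of $\tilde X_{\sI}$ and a vanishing entry of $\tilde X_{\sJ}W$), whose union still has measure zero, so the construction succeeds with probability $1$.

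The main obstacle is the order of quantifiers: $W$ is shared by all rows, so it must be fixed after choosing $g$ on $\sI$ but before choosing $g$ on $\sJ$, and one must verify that this forced $W$ is (almost surely) nonzero — this is exactly where the no-missing-frequency hypothesis on $Z$ is used — and then that conditioning on $\tilde X_{\sI}$ (equivalently on $W$) leaves $\tilde X_{\sJ}W$ nondegenerate, which follows from independence of the disjoint blocks of entries of $\tilde X$. Once the bad sets are recognized as zero sets of nonzero polynomials in the entries of $\tilde X$, both ``with probability one'' claims are routine.
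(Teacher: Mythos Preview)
Your proposal is correct and follows essentially the same route as the paper: split into the multiple-eigenvalue block $\sI$ and the simple-eigenvalue block $\sJ$, use the almost-sure invertibility of the $S\times S$ Gaussian matrix $\tilde X_{\sI}$ to solve for $W$, and then interpolate $g$ on the distinct eigenvalues to match $\sJ$. The only cosmetic difference is that the paper leaves $g(\Lambda)_{\sI\sI}$ as a generic nonsingular diagonal and writes $W=\tilde X_{\sI}^{-1}g(\Lambda)_{\sI\sI}^{-1}\tilde Z_{\sI}$, whereas you specialize to $g\equiv 1$ on $\sI$; your handling of the independence of $\tilde X_{\sJ}$ from $W$ is in fact more explicit than the paper's.
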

The proof can be found in Appendix~\ref{app::random_expressive}.

We have seen the power of random features for improving the expressive power of linear GNNs. However, on large graphs, this technique can worsen the performance of models. As the coefficient of components of node features vibrates frequently, the filter function may be very complex even if we fit simple graph signals. Therefore, as formalized in Proposition~\ref{prop::randomPolyDegree}, $O(n)$-degree polynomial is needed, which is impossible to implement for large graphs.

\begin{proposition}\label{prop::randomPolyDegree}
{ If $\hat L$ has no multiple eigenvalue, $O(n)$ degree polynomial is needed for linear GNN using Gaussian random features to predict a one-dimensional non-zero target whose coeffcients of frequency components can be expressed as a $O(1)$-degree polynomial.}
\end{proposition}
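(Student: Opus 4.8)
The plan is to show that pushing a Gaussian feature through the graph Fourier transform produces a frequency signal whose \emph{sign pattern} oscillates $\Omega(n)$ times, and then to use the fact that each such oscillation forces a distinct real root of the filter polynomial, so its degree must be $\Omega(n)$ in expectation. Since the target's frequency coefficients come from a fixed $O(1)$-degree polynomial, their sign pattern is essentially constant, so the oscillations really come from the randomness and cannot be ``cancelled'' cheaply.

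First I would fix notation. Because $\hat L$ has no multiple eigenvalue, its eigenvalues $0\le\lambda_1<\lambda_2<\cdots<\lambda_n\le 2$ are pairwise distinct. For a Gaussian feature $x\sim N_n(0,\sigma^2 I)$, Proposition~\ref{prop::randomSignal} gives $\tilde x=U^Tx\sim N_n(0,\sigma^2 I)$, so the entries $\tilde x_1,\dots,\tilde x_n$ are i.i.d.\ $N(0,\sigma^2)$: in particular a.s.\ nonzero, with i.i.d.\ fair $\pm1$ signs $s_i:=\sign(\tilde x_i)$. A one-dimensional linear GNN on this feature is $z=g(\hat L)xW$ with scalar $W$, hence in the frequency domain $\tilde z_i=W\,g(\lambda_i)\tilde x_i$; absorbing $W$ into $g$ (the target is nonzero, so $W\neq 0$ and $Wg$ has the same degree as $g$) I may take $W=1$. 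The target is given by $\tilde z_i=q(\lambda_i)$ for a fixed nonzero polynomial $q$ of degree $O(1)$, and since the target contains all frequency components I may assume $q(\lambda_i)\neq 0$ for every $i$. Producing $z$ therefore requires a polynomial $g$ with $g(\lambda_i)=q(\lambda_i)/\tilde x_i$ for $i=1,\dots,n$. (The multi-dimensional-feature case reduces to this one, since $\tilde X W$ is again an $N_n(0,\sigma'^2 I)$ vector for fixed $W$.)

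Next I would run the sign-change / root-counting argument. Write $c_i:=\sign(q(\lambda_i))$, so that $\sign(g(\lambda_i))=c_is_i$. Whenever $\sign(g(\lambda_i))\neq\sign(g(\lambda_{i+1}))$, the intermediate value theorem yields a root of $g$ in the open interval $(\lambda_i,\lambda_{i+1})$; distinct indices give disjoint intervals and hence distinct roots, so $\deg g$ is at least the number of sign changes of the sequence $(\sign(g(\lambda_i)))_{i=1}^n$. Because $\sign(g(\lambda_i))=c_is_i$, a sign change at $i$ occurs precisely when exactly one of the sequences $(c_i)$, $(s_i)$ changes sign at $i$, so the number of such sign changes is at least $(\#\text{sign changes of }(s_i))-(\#\text{sign changes of }(c_i))$. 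The sign of the $O(1)$-degree polynomial $q$ changes at most $\deg q=O(1)$ times along $\lambda_1<\cdots<\lambda_n$ (each change needs a root of $q$ in a distinct interval), while $\E[\#\text{sign changes of }(s_i)]=\sum_{i=1}^{n-1}\Pr[s_i\neq s_{i+1}]=(n-1)/2$ by independence. Taking expectations, $\E[\deg g]\ge (n-1)/2-O(1)=\Omega(n)$; together with the fact that Lagrange interpolation always attains degree $\le n-1$, the required degree is in fact $\Theta(n)$. If a high-probability statement is wanted, McDiarmid's inequality applies to the sum of sign-change indicators (flipping one coordinate $\tilde x_i$ changes it by at most $2$).

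The main obstacle will be keeping the root bookkeeping airtight rather than any deep idea: one must verify that distinct sign changes of $(\sign(g(\lambda_i)))$ genuinely produce distinct roots (they lie in disjoint intervals), note that the sign-change indicators of $(s_i)$ overlap pairwise so that linearity of expectation — not a naive independence/Chernoff argument — is the right tool for the mean, and dispose of the degenerate case where $q(\lambda_i)=0$ for some $i$ (there the interpolation condition forces $g(\lambda_i)=0$ outright, which only adds roots and so does not hurt the bound). These are all routine once set up carefully.
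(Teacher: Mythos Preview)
Your proposal is correct and follows essentially the same approach as the paper: pass to the frequency domain, use that $\tilde x$ is i.i.d.\ Gaussian so adjacent entries have opposite signs with probability $1/2$, and argue that each such sign flip not matched by a sign flip of the $O(1)$-degree target polynomial $q$ forces a real root of $g$ by the intermediate value theorem. Your write-up is in fact more careful than the paper's (the explicit inequality $\#\{\text{sign changes of }cs\}\ge\#\{\text{sign changes of }s\}-\#\{\text{sign changes of }c\}$, the exact expectation $(n-1)/2$, and the McDiarmid remark for concentration); the only place to tighten is the parenthetical on multi-dimensional features, since $W$ is learnable and $\tilde X W$ is not Gaussian once you optimize over $W$ --- but the paper itself treats only the one-dimensional case, so this is not needed.
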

The proof of Proposition~\ref{prop::randomPolyDegree} can be found in Appendix~\ref{app::randomPolyDegree}.

$O(n)$-degree polynomial is too time- and memory-consuming for real-world datasets. In practice, we can only afford constant-degree polynomials (such as degree $10$ in our experiments), which explains why random features usually worsen the performance.

To verify our analysis, we compare JacobiConv (our proposed model) with random features (Random), JacobiConv with learnable random features (Learnable), and the original JacobiConv in Table~\ref{tab::randExp}. Random features significantly worsen the performance, while learnable random features performs much better. Much to our surprise, Learnable even beats JacobiConv on two datasets, which indicates that node features may have little useful information in some datasets.

\begin{table}[t]
\centering
\caption{Results on real-world datasets: Mean accuracy (\%) $\pm$ $95$\% confidence interval.}\label{tab::randExp}
\begin{tabular}{cccccc}
\hline
Datasets        & JacobiConv & Learnable &Random     \\ \hline
Cora     & $\bf{88.98_{\pm0.46}}$ & $82.82_{\pm 0.59}$& $23.46_{\pm 1.92}$\\
CiteSeer & $\bf{80.78_{\pm0.79}}$ & $ 71.17_{\pm 1.20}$&$18.99_{\pm 1.02}$ \\
Pubmed   & $\bf{89.62_{\pm0.41}}$  & $83.26_{\pm 0.46}$ & $35.90_{\pm 0.83}$\\
Computers & $\bf{90.45_{\pm0.34}}$ & $83.69_{\pm 0.30}$ & $18.40_{\pm 2.25}$\\
Photo     & $\bf{95.43_{\pm0.23}}$ & $91.87_{\pm 0.19}$ & $19.90_{\pm 2.71}$\\
Chameleon & ${74.20_{\pm1.03}}$ & $\bf{75.10_{\pm 0.83}}$&$25.03_{\pm 2.30}$ \\
Actor     & $\bf{41.17_{\pm0.64}}$ & $24.50_{\pm 1.10}$ & $25.88_{\pm 0.43}$\\
Squirrel  & ${57.38_{\pm1.25}}$ & $\bf{61.00_{\pm 1.09}}$ & $20.26_{\pm 0.73}$\\
Texas     & $\bf{93.44_{\pm2.13}}$ & $72.13_{\pm 5.41}$  & $28.36_{\pm 5.57}$\\
Cornell   & $\bf{92.95_{\pm2.46}}$ & $31.31_{\pm 11.48}$ & $25.74_{\pm 7.21}$ \\ \hline
\end{tabular}
\vskip -0.1in
\end{table}

\section{Can Bias Complete Missing Components?}\label{app::bias}
Missing components hamper the expressive power of linear GNNs. Adding a bias to the linear transformation may alleviate this problem, as it can introduce new components. However, bias cannot solve this problem completely.

\begin{proposition}
There exists a graph of size $n$ and $X\in \sR^{n\times d}$ with missing components such that $\forall b\in \sR^{1\times d'}, \forall W\in \sR^{d\times d'}$, some frequency components are still missing from $XW+b$.
\end{proposition}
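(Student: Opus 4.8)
The plan is to construct an explicit counterexample. The key observation is that the bias only adds a multiple of the all-ones vector $\mathbf{1}_n$ to each column of $XW$, so the span of the columns of $XW+b$ lies in $\mathrm{colspan}(X) + \mathrm{span}\{\mathbf{1}_n\}$. Hence if we can find a graph and node features $X$ such that some eigenvector $U_{:\lambda}$ of $\hat L$ is orthogonal to every column of $X$ \emph{and} orthogonal to $\mathbf{1}_n$, then $U_{:\lambda}^T(XW+b) = U_{:\lambda}^T(XW) + (U_{:\lambda}^T\mathbf{1}_n)b = 0$ for all $W$ and $b$, so the $\lambda$ frequency component is missing from $XW+b$ regardless of the choice of parameters. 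So the whole task reduces to exhibiting a graph that has an eigenvector orthogonal to $\mathbf{1}_n$, together with features that avoid that eigenvector.

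First I would pick a concrete small graph where $\mathbf{1}_n$ is itself \emph{not} an eigenvector is not needed; what we want is a graph where some eigenvector $v$ satisfies $v \perp \mathbf{1}_n$. Recall that for $\hat L = I - \hat A$ with $\hat A = D^{-1/2}AD^{-1/2}$, the eigenvector associated with the smallest eigenvalue $\lambda = 0$ is $D^{1/2}\mathbf{1}_n$ (not $\mathbf{1}_n$ unless the graph is regular). A clean choice is therefore a \textbf{regular graph}: then $\hat L = I - A/d$, the constant vector $\mathbf{1}_n$ is the eigenvector for $\lambda=0$, and by the spectral theorem every other eigenvector is orthogonal to $\mathbf{1}_n$. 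Concretely, take $\gG$ to be a single edge ($n=2$, $1$-regular): its normalized Laplacian has eigenvalues $0$ and $2$, with eigenvectors $\tfrac{1}{\sqrt2}(1,1)^T$ and $\tfrac{1}{\sqrt2}(1,-1)^T$. Now choose node features $X$ with both rows equal, e.g. $X = (1,1)^T \in \sR^{2\times 1}$ (so $d=1$); then the columns of $X$ are multiples of the $\lambda=0$ eigenvector, and the $\lambda=2$ frequency component is missing from $X$. The second key step is to check that it stays missing: for any $W \in \sR^{1\times d'}$ and any $b \in \sR^{1\times d'}$, the matrix $XW + b$ has every row equal (both rows equal $W+b$), hence each column is a multiple of $(1,1)^T = \sqrt2\, U_{:0}$, which is orthogonal to $U_{:2} = \tfrac{1}{\sqrt2}(1,-1)^T$. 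Therefore $U_{:2}^T(XW+b) = 0$, i.e. the $\lambda=2$ component remains missing, for \emph{all} $b, W$.

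I would then write this up as: let $\gG$ be the graph on $\sV = \{1,2\}$ with the single edge $\{1,2\}$ and feature matrix $X = \mathbf{1}_2 \in \sR^{2\times 1}$; compute the eigendecomposition of $\hat L$ explicitly; observe $\tilde X_{2} = U_{:2}^T X = 0$ so the component at $\lambda=2$ is missing; then for arbitrary $W, b$ note $XW + b = \mathbf{1}_2(W+b)$, a rank-one matrix whose row space is spanned by $W+b$ and whose column space is spanned by $\mathbf{1}_2$, so $U_{:2}^T(XW+b) = (U_{:2}^T\mathbf{1}_2)(W+b) = 0$. This gives the claim with $n = 2$. (If one prefers $n>2$ or $d>1$, the same argument works on any $r$-regular graph by taking all rows of $X$ equal, since constancy of the rows is preserved by $X \mapsto XW + b$ and the constant vector is the $\lambda=0$ eigenvector.)

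There is essentially no obstacle here — the main thing to get right is the bookkeeping of \emph{which} vector is the $\lambda=0$ eigenvector of the \emph{normalized} Laplacian (it is $D^{1/2}\mathbf{1}_n$, so using a regular graph to make it $\mathbf{1}_n$ keeps the argument transparent), and stating clearly that the property "all rows of $XW+b$ are equal" is invariant under right-multiplication and bias addition, which is what forces the missing component to persist. The only modeling subtlety worth a sentence is that $XW$ here may not literally denote a square transformation; since $W \in \sR^{d\times d'}$ and $b\in\sR^{1\times d'}$ are unconstrained and $d=1$, $W+b$ ranges over all of $\sR^{1\times d'}$, so no choice escapes the rank-one structure.
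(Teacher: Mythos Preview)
Your proof is correct and rests on the same core idea as the paper's: exhibit an eigenvector $v$ of $\hat L$ with $v^T\mathbf{1}_n=0$, then choose $X$ so that $v^TX=0$; since every column of $XW+b$ lies in $\mathrm{colspan}(X)+\mathrm{span}\{\mathbf{1}_n\}$, the $v$-component remains missing for all $W,b$.

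The constructions differ, however. The paper takes a graph of arbitrary size $n$ containing two \emph{isolated} nodes, so that $\hat L$ block-decomposes and one can choose the eigenvector $\tfrac{1}{\sqrt2}(1,-1,0,\dots,0)^T$ on the isolated block, which is orthogonal to $\mathbf{1}_n$. You instead pick a \emph{regular} graph (the single edge for $n=2$, or any $r$-regular graph in general), so that $\mathbf{1}_n$ itself is the $\lambda=0$ eigenvector and every other eigenvector is automatically orthogonal to it. Your route is arguably cleaner: it avoids degree-zero nodes (where $D^{-1/2}$ needs a convention) and, in the $n=2$ case, the eigenvalues $0$ and $2$ are simple so there is no ambiguity in the eigenbasis---whereas the paper's isolated block has a repeated eigenvalue and the argument silently depends on a particular choice of orthonormal basis for that eigenspace. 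The paper's construction, on the other hand, makes the ``size $n$'' quantifier explicit by padding with an arbitrary subgraph $\gS_2$; your remark that any regular graph works covers this as well.
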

\begin{proof}
Consider a graph $\gG$ of size $n$ whose $0$, $1$ nodes are isolated. Let $\gS_1$ denote the subgraph composed of the two isolated nodes. $\gS_2$ means the subgraph composed of nodes $\{2,...,n-1\}$.  Let $\hat L$, $\hat L_{\gS_1}$, $\hat L_{\gS_2}$ denote the normalized Laplacian of $\gG$, $\gS_1$, $\gS_2$, respectively. We have
\begin{align}
    \hat L_{\gS_1} &= \begin{bmatrix}\frac{\sqrt{2}}{2}&\frac{\sqrt{2}}{2}\\-\frac{\sqrt{2}}{2}&\frac{\sqrt{2}}{2}\end{bmatrix}\begin{bmatrix}1&0\\0&1\end{bmatrix}\begin{bmatrix}\frac{\sqrt{2}}{2}&-\frac{\sqrt{2}}{2}\\\frac{\sqrt{2}}{2}&\frac{\sqrt{2}}{2}\end{bmatrix}\\
    \hat L_{\gS_2} &= U_{\gS_2}\Lambda_{\gS_2}U_{\gS_2}^T\\
    \hat L &= \text{diag}(L_{\gS_1}, L_{\gS_2})\\
    \hat L &= U_{\gG}\Lambda U_{\gG}^T,
\end{align}
where $U_{\gG}^T =\text{diag}( \begin{bmatrix}\frac{\sqrt{2}}{2}&-\frac{\sqrt{2}}{2}\\\frac{\sqrt{2}}{2}&\frac{\sqrt{2}}{2}\end{bmatrix}, U_{\gS_2}^T)$. Therefore, $\forall b$, the $0^{\text{th}}$ row of $U_{\gG}^T1_nb$ is $\vec 0$. Let $\tilde X_{0}=0$. Therefore, the $0^{\text{th}}$ row of $U^T(XW+b)$ is $\vec 0$. In other words, some components are missing from $XW+b$.
\end{proof}

\iffalse
\subsection{Basis Choice}
Similar to $W$, the optimization of bias is also only related to the learnable filter function as a whole. Therefore, near global minimum, $b$ can also be merged into $X$ and then all analysis is the same as in Section~\ref{sec::opt}.

\fi 

\section{Connection between Linear GNN and Polynomial Regression.}\label{app::Gram-Hessian}

Let $\tilde F(x)=\frac{1}{F(2)}F(x)$. Take $n'$ independent random variables $x_1,x_2,...,x_n'$ from the $\tilde F$ distribution and set these variables as the points of linear regression. The element of the Gram matrix $G'$ of the polynomial regression using $g_k, k=0,2,...,K$ basis is,
\begin{equation}
\begin{aligned}
G'_{k_1k_2} = \sum_{i=1}^{n'} g_{k_1}(x_i)g_{k_2}(x_i).\\
\end{aligned}
\end{equation}
By the weak law of large numbers of probability theory,
\begin{equation}
\begin{aligned}
\lim_{n'\to\infty} \frac{G'_{k_1k_2}}{n'} = {\mathcal {E}} \left[\frac{G'_{k_1k_2}}{n'}\right]=\int_{0}^2 g_{k_1}(x)g_{k_2}(x) \tilde f(x) \dd x.\\
\end{aligned}
\end{equation}
Therefore, $n'\to \infty$, $\frac{G'_{ij}}{n}F(2)=G_{ij}$. Scalar $\frac{F(2)}{n}$ will not affect condition number. When $n\to \infty$, $\kappa(G')$, the condition number of Gram matrix of polynomial regression using bases $g_k,k=0,1,2,.., K$ with points sampled from $\tilde F$ distribution, equals to, $\kappa(H)$, the condition number of linear GNNs' Hessian matrix. Therefore, we can use some conclusions on polynomial regression's bases.

\section{Datasets}\label{app::datasets}

We summarize the statistics of these datasets in Table~\ref{tab::datasets}.

\begin{table}[h]
\centering
\caption{Dataset statistics. $N_{\text{miss}}$ is the number of missing frequency components. $R_{\text{multi}}$ is the ratio of multiple eigenvalues in all different Laplacian eigenvalues (\%).}\label{tab::datasets}
\vskip 0.15in
\begin{center}
\begin{small}
\begin{sc}
\setlength{\tabcolsep}{1mm}
{\begin{tabular}{ccccccccccc}
\hline
Datasets & Cora & CiteSeer & PubMed & Computers & Photo  & Chameleon & Squirrel & Actor & Texas & Cornell \\
\hline
$|V|$ & 2708 & 3327 & 19717 & 13752 & 7650 & 2277 & 5201 & 7600 & 183 & 183     \\
$|E|$ & 5278 & 4552 & 44324 & 245861 & 119081 & 31371 & 198353& 26659 & 279 & 277     \\
$N_{\text{miss}}$ & 0 &0 &0& 0 &0 &0& 0 &0 &0& 0\\
$R_{\text{multi}}$ & 1.49&3.25&0.03&0.16&0.22&0.22&0.00&0.13&0.00&0.00\\
\hline
\end{tabular}}
\end{sc}
\end{small}
\end{center}
\vskip -0.1in
\end{table}

\section{Experimental Settings}\label{app::experimentsetting}

{\bf Computing infrastructure.}~~We leverage Pytorch Geometric and Pytorch for model development. All experiments are conducted on an Nvidia A40 GPU on a Linux server. 

{\bf Baselines.}~~We directly use the results reported in \citep{BernNet}. JacobiConv and linear GNN with other bases have fewer parameters than baselines, as linear GNN have a fixed number of parameters given the node feature dimension and output dimension.

{\bf Model hyperparameter for Synthetic Datasets.}~~We use optuna to perform random searches. Hyperparameters were selected to minimize average loss on the fifty images. The best hyperparameters selected for each model can be found in our code. For linear GNNs, we use different learning rate and weight decay for the linear layer $W$, parameters of PCD $\theta$, and the linear combination parameters $\alpha$. We select learning rate from $\{0.0005, 0.001, 0.005, 0.01, 0.05\}$, weight decay from $\{0.0, 5e-5, 1e-4, 5e-4, 1e-3\}$. We select PCD's $\gamma$ from $\{0.5, 1.0, 1.5, 2.0\}$.  Jacobi Basis' $a$ and $b$ are selected from $[-1.0, 2.0]$. 

{\bf Model hyperparameter for real-world datasets.}~~ Hyperparameters were selected to optimize accuracy scores on the validation sets. We use different dropout for $X$ and $XW$. Both dropout probabilities are selected from $[0.0, 0.9]$. Other parameters are searched in the same way as synthetic datasets. 

{\bf Training process.} We utilize Adam optimizer to optimize models and set an upper bound ($1000$) for the number of forward and backward processes. An early stop strategy is used, which finishes training if the validation score does not increase after $200$ epochs for real-world datasets. 

\section{Synthetic Dataset Results of Models with PCD}\label{app::filter+PCD}

Results are shown in Table~\ref{tab::filter+PCD}. JacobiConv still outperforms all other bases. In general, there is little performance difference between models with PCD and those without. The reason for the invalidation of PCD can be that linear GNN trained with the squared loss on synthetic datasets can converge to a global minimum, and the effect of PCD to help convergence may be unimportant.

\begin{table}[t]
\centering
\caption{Average of sum of square loss over 50 images.}\label{tab::filter+PCD}
\vskip 0.15in
\begin{center}
\begin{small}
\begin{sc}
\setlength{\tabcolsep}{1mm}
{\begin{tabular}{cccccc}
\hline
JacobiConv	&$\bf{0.0003}$&$\bf{0.0017}$&$\bf{0.0253}$&$\bf{0.0157}$&$\bf{0.2972}$\\\hline
Monomial	&$2.3257$	&$4.0960$	&$10.9556$	&$8.8068$	&$10.8799$\\
Chebyshev	&$1.0037$	&$2.3468$	&$7.8522$	&$6.0297$	&$9.1175$	\\
Bernstein	&$0.0110$	&$0.0058$	&$0.1517$	&$0.1607$	&$0.5705$	\\
\hline 
\end{tabular}}
\end{sc}
\end{small}
\end{center}
\vskip -0.1in
\end{table}

\section{Analysis Using Gradient Flow}\label{app::dynamics}
Using gradient flow method, i.e., gradient descent with infinitesimal steps~\citep{Optim}, we analyze the optimization of linear GNN. We prove that linear GNN can converge to the global minimum of loss function under mild conditions. 

Let $\sI$ denote the training set containing nodes. The prediction of a linear GNN is,
\begin{equation}
\begin{aligned}
Z_{il}=\sum_{k=0}^K\sum_{j=1}^d\alpha_{kl}(g(\hat L)X)_{ij}W_{jl}.
\end{aligned}
\end{equation}

Gradient flow method assumes that $\frac{\mathrm{d}}{\dd t}W_{jl}=-\frac{\partial L}{\partial W_{jl}}, \frac{\mathrm{d}}{\dd t}\alpha_{kl}=-\frac{\partial L}{\partial \alpha_{kl}}$.

Therefore,
\begin{align}
\frac{\mathrm{d}}{\mathrm{d}t}L
=\sum_{\text{all elements}}\frac{\dd L}{\dd Z}_{\sI\sI}\odot \frac{\dd Z_{\sI\sI}}{\dd t}=-\sum_{i\in\sI}\sum_{l=1}^{d'}\frac{\partial L}{\partial Z_{il}}(\sum_{k=0}^K\frac{\partial Z_{il}}{\partial \alpha_{kl}}\frac{\partial L}{\partial \alpha_{kl}}+\sum_{j=1}^d \frac{\partial Z_{il}}{\partial W_{jl}}\frac{\partial L}{\partial W_{jl}}).\\
\end{align}
\begin{equation}
\begin{aligned}
\frac{\partial L}{\partial \alpha_{kl}}=\sum_{i\in\sI}\frac{\partial L}{\partial Z_{il}}\frac{\partial Z_{il}}{\partial \alpha_{kl}}.
\end{aligned}
\end{equation}
\begin{equation}
\begin{aligned}
\frac{\partial L}{\partial W_{jl}}=
\sum_{i\in \sI}\frac{\partial L}{\partial Z_{il}}\frac{\partial Z_{il}}{\partial W_{jl}}.
\end{aligned}
\end{equation}
Therefore, 
\begin{equation}
\begin{aligned}
\frac{\mathrm{d}}{\mathrm{d}t}L
&=-\sum_{i\in\sI}\sum_{l=1}^{d'}\frac{\partial L}{\partial Z_{il}}
(\sum_{k=0}^K\frac{\partial Z_{il}}{\partial \alpha_{kl}}\sum_{i'\in\sI}\frac{\partial L}{\partial Z_{i'l}}\frac{\partial Z_{i'l}}{\partial \alpha_{kl}}+\sum_{j=1}^d \frac{\partial Z_{il}}{\partial W_{jl}}\sum_{i'\in \sI}\frac{\partial L}{\partial Z_{i'l}}\frac{\partial Z_{i'l}}{\partial W_{jl}})\\
&=-\sum_{l=1}^{d'}\sum_{i\in\sI, i'\in\sI}\frac{\partial L}{\partial Z_{il}}\frac{\partial L}{\partial Z_{i'l}} (\sum_{k=0}^K \frac{\partial Z_{il}}{\partial \alpha_{kl}}\frac{\partial Z_{i'l}}{\partial \alpha_{kl}}+\sum_{j=1}^d\frac{\partial Z_{il}}{\partial W_{jl}}\frac{\partial Z_{i'l}}{\partial W_{jl}})\\
&=-\sum_{l=1}^{d'}\frac{\partial L}{\partial Z_{\sI l}}^T
(M^{(l)}+S^{(l)})\frac{\partial L}{\partial Z_{\sI l}},\\
\end{aligned}
\end{equation}

where $M^{(l)}=\frac{\partial Z_{\sI l}}{\partial \alpha_{:l}}\frac{\partial Z_{\sI l}}{\partial \alpha_{:l}}^T$ and $S^{(l)}=\frac{\partial Z_{\sI l}}{\partial W_{:l}}\frac{\partial Z_{\sI l}}{\partial W_{:l}}^T$, where we define $\frac{\partial \vec a}{\partial \vec b}$, the derivative of vector $\vec a$ with respect to vector $\vec b$, is a matrix whose $i, j$ element is $\frac{\partial \vec a_i}{\partial \vec b_j}$.

Both $M^{(l)}$ and $S^{(l)}$ are symmetric semi-definite matrix. Let $\sigma_l$ denote the minimum eigenvalue of $M^{(l)}+S^{(l)}$, 

\begin{equation}
\begin{aligned}
\frac{\mathrm{d}}{\mathrm{d}t}L
\le -\sum_{l=1}^d \frac{\partial L}{\partial Z_{\sI l}}^T \sigma_l \frac{\partial L}{\partial Z_{\sI l}}.\\
\end{aligned}
\end{equation}

If $L$ is squared loss, namely $\frac{1}{2}\sum_{l=1}^{d'}\sum_{n\in\sI} ||Z_{nl}-Y_{nl}||_F^2$, and $\sigma_l>0$, linear GNN can always converge to global minimum of loss function.
\begin{equation}
\begin{aligned}
\frac{\mathrm{d}}{\mathrm{d}t}L \le -\sum_{l} \sigma_l  (Z_{\sI l}-Y_{\sI l})^T(Z_{\sI l}-Y_{\sI l}) \le -2\sigma_{\min} L,
\end{aligned}
\end{equation}
where $\sigma_{\min}=\min_l \sigma_l$. Let $L^*$ denote the minimum loss. $L^*>0$ and $\frac{{\dd L}^*}{\dd t}=0$.
\begin{equation}
\begin{aligned}
\frac{\mathrm{d}}{\mathrm{d}t}(L-L^*)\le -2\sigma_{\min} (L-L^*).
\end{aligned}
\end{equation}
Assuming $L_t$ denotes the loss at time t,
\begin{equation}
\begin{aligned}
(L_t-L^*)\le e^{-2\sigma_{\min} t} (L_0-L^*).
\end{aligned}
\end{equation}

Therefore, we have a prior guarantee of linear convergence to a global minimum for any graph with $\sigma_{\min}>0$. For any desired $\epsilon>0$, we have that $L_0-L^*<\epsilon$ for any T such that
\begin{equation}
\begin{aligned}
T\ge \frac{1}{2\sigma_{\min}}\log{\frac{L_{0}-L^*}{\epsilon}}.
\end{aligned}
\end{equation}

\section{Bases Visualization}\label{app::basesplot}

We visualize Monomial, Chebyshev, Bernstein and Jacobi Bases in Figure~\ref{fig::basesplot}. Note that Jacobi Bases change with hyperparameters. We use the hyperparameters for our image dataset here.
\begin{figure}[ht]
\begin{center}
\centerline{\includegraphics[scale=0.50]{ 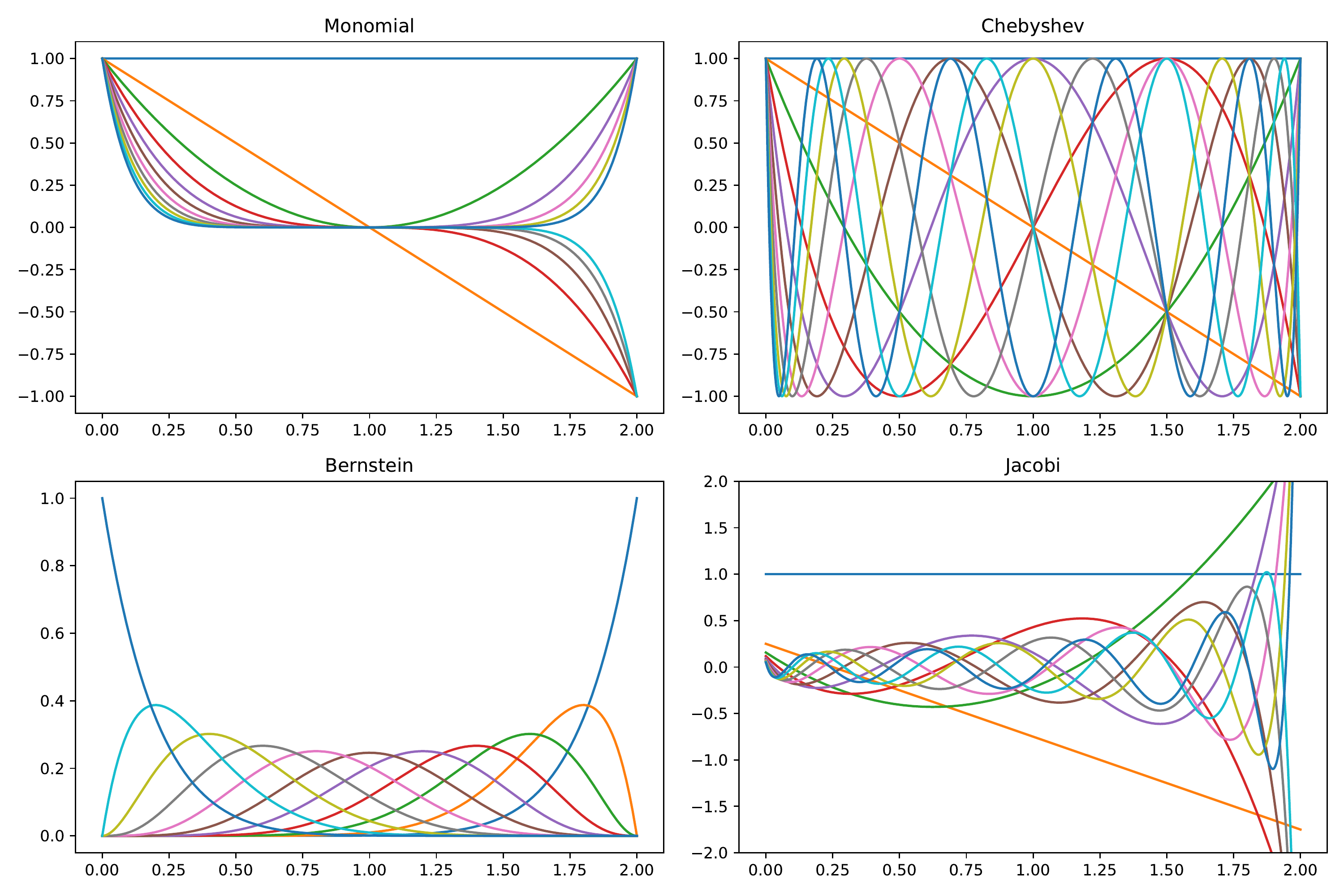}}
\caption{Polynomial bases visualization.}\label{fig::basesplot}
\end{center}
\vskip -0.2in
\end{figure}

\begin{figure}[ht]
\vskip -0.2in
\begin{center}
\centerline{\includegraphics[width=1.0\textwidth]{ 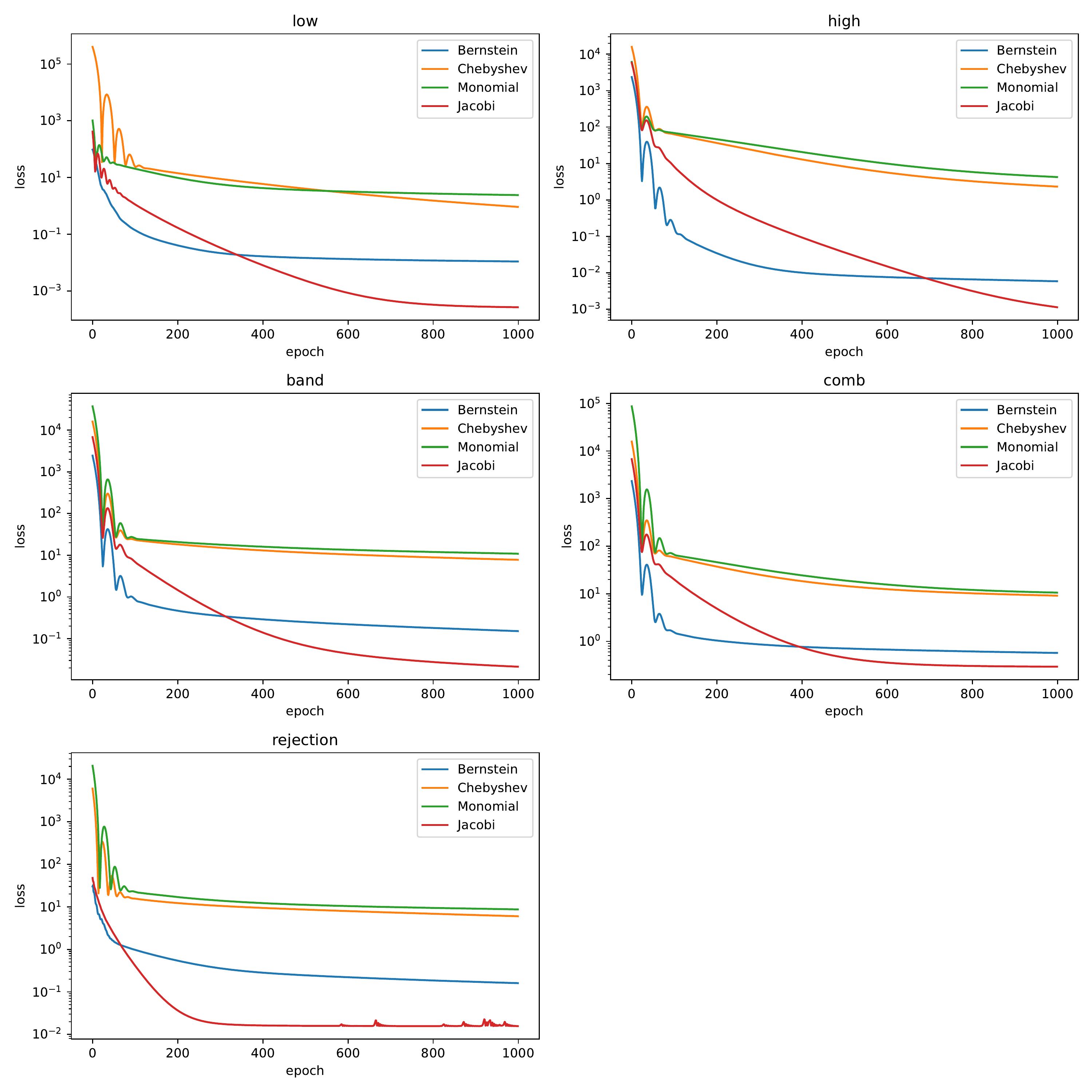}}
\caption{Training curve on some datasets.}\label{fig::ImgOptim}
\vskip -0.2in
\end{center}
\end{figure}

\begin{table}[htb]
\centering
\caption{Comparison between FullCoef and JacobiConv.}\label{tab::FullCoef}
\vskip 0.15in
\begin{center}
\begin{small}
\begin{sc}
\setlength{\tabcolsep}{0.5mm}
{
\begin{tabular}{lccccc}
\hline
&Cora &Citeseer &Pubmed &Computers &Photo
\\
\hline
FullCoef    &$87.37_{\pm 1.20}$ &$\bf{80.95_{\pm 1.06}}$ &$89.37_{\pm 0.64}$ &$89.94_{\pm 0.46}$ &$94.93_{\pm 0.50}$ \\
JacobiConv &$\bf{88.98_{\pm 0.46}}$ &$80.78_{\pm 0.79}$ &$\bf{89.62_{\pm 0.41}}$ &$\bf{89.96_{\pm 0.29}}$ &$\bf{95.43_{\pm 0.23}}$ \\
\hline
&Chameleon &Actor &Squirrel &Texas &Cornell\\
\hline
FullCoef &$64.79_{\pm 1.42}$ &$38.99_{\pm 1.37}$ &$49.61_{\pm 1.22}$ &$92.30_{\pm 4.52}$ &$91.48_{\pm 5.62}$\\
JacobiConv &$\bf{74.20_{\pm 1.03}}$ &$\bf{41.17_{\pm 0.64}}$ &$\bf{57.38_{\pm 1.25}}$ &$\bf{93.44_{\pm 2.13}}$ &$\bf{92.95_{\pm 2.46}}$\\
\hline
\end{tabular}}
\end{sc}
\end{small}
\end{center}
\end{table}

\section{The Optimization of linear GNN with Different Polynomial Basis}\label{app::ImgOptim}

In this section, we show how loss drops with different polynomial filter basis for linear GNNs in Figure~\ref{fig::ImgOptim}.

On all five datasets, Jacobi basis achieves the lowest loss and a higher convergence rate than Monomial and Chebyshev basis. However, Bernstein polynomial basis shows a high optimization rate in a few first epochs, which may attribute to that the parameter is far from the local minimum initially, and our approximation fails. In contrast, after a few epochs, Jacobi basis approaches the local minimum and shows a higher convergence rate.

\section{FullCoef vs JacobiConv}

JacobiConv first linear transforms node features and then filters the signal in each output dimension individually. In contrast, some models like ChebyConv~\citep{ChebyConv} use individual filter functions for each input dimension-output dimension pair to filter the signal in the input dimension and accumulate the filtered signal in the output dimension, which we call FullCoef. Though FullCoef may boost expressive power, extra parameters can also worsen the generalization. We compare FullCoef JacobiConv and original JacobiConv in Table~\ref{tab::FullCoef}. Results show that JacobiConv outperforms FullCoef on $9$ out of $10$ datasets. The extra express power that FullCoef brings is minor.

\end{document}